\PassOptionsToPackage{table}{xcolor} 
\documentclass{article} 
\usepackage{iclr2027_conference,times}
 
\usepackage{amsmath,amsfonts,bm}

\def\eqref#1{equation~\ref{#1}}

\def\1{\bm{1}}

\def\rk{{\textnormal{k}}}

\DeclareMathAlphabet{\mathsfit}{\encodingdefault}{\sfdefault}{m}{sl}
\SetMathAlphabet{\mathsfit}{bold}{\encodingdefault}{\sfdefault}{bx}{n}

\newcommand{\E}{\mathbb{E}}

\newcommand{\R}{\mathbb{R}}

\newcommand{\Var}{\mathrm{Var}}

\newcommand{\Cov}{\mathrm{Cov}}

\usepackage{comment}
\usepackage[utf8]{inputenc}
\usepackage[T1]{fontenc}
\usepackage{url}
\usepackage{booktabs}
\usepackage{nicefrac}
\usepackage{microtype}
\usepackage{xcolor}
\usepackage{amsmath,amssymb,amsfonts,amsthm}
\usepackage{bbm}
\usepackage{tabularx}
\usepackage{multirow}
\usepackage{graphicx}
\usepackage{float}
\usepackage{caption}
\usepackage{subcaption}
\usepackage{wrapfig}
\usepackage[export]{adjustbox}
\usepackage{svg}
\usepackage{algorithm}
\usepackage{algpseudocode}
\usepackage{bibunits}
\usepackage{hyperref}
\usepackage{cleveref}
\usepackage{tikz}
\usetikzlibrary{arrows.meta,calc}
\definecolor{cK}{RGB}{52,101,164}   
\definecolor{cF}{RGB}{196,121,34}   
\definecolor{cR}{RGB}{150,62,92}    
\definecolor{cC}{RGB}{47,125,80}    
\definecolor{lA}{RGB}{230,97,1}     
\definecolor{lB}{RGB}{94,60,153}    
\definecolor{lC}{RGB}{27,158,119}   
\newcommand{\cairnpanel}[5]{%
  \fill[#3!5, rounded corners=4pt] (#1,0) rectangle (#2,3.6);
  \begin{scope}
    \clip[rounded corners=4pt] (#1,0) rectangle (#2,3.6);
    \fill[#3] (#1,3.18) rectangle (#2,3.6);
  \end{scope}
  \draw[#3, line width=0.7pt, rounded corners=4pt] (#1,0) rectangle (#2,3.6);
  \node[circle, fill=white, text=#3, font=\sffamily\bfseries\tiny, inner sep=0pt, minimum size=8pt] at ({#1+0.22},3.39) {#4};
  \node[anchor=west, text=white, font=\sffamily\bfseries\scriptsize] at ({#1+0.36},3.39) {#5};
}

\theoremstyle{plain}
\newtheorem{theorem}{Theorem}[section]
\newtheorem{proposition}[theorem]{Proposition}
\newtheorem{lemma}[theorem]{Lemma}
\newtheorem{corollary}[theorem]{Corollary}

\theoremstyle{definition}

\theoremstyle{remark}

\let\R\relax   \newcommand{\R}{\mathbb{R}}
\let\E\relax   \newcommand{\E}{\mathbb{E}}
\let\tr\relax  \DeclareMathOperator{\tr}{tr}
\let\Var\relax \DeclareMathOperator{\Var}{Var}
\let\Cov\relax \DeclareMathOperator{\Cov}{Cov}

\newcommand{\norm}[1]{\lVert #1 \rVert}

\newcommand{\Ktil}{\widetilde{K}}

\newcommand{\psdle}{\preceq}
\newcommand{\method}{\textsc{CAIRN}}
\newcommand{\CHECK}[1]{#1}
 
\definecolor{hlblue}{RGB}{232,240,250}

\usepackage{tcolorbox}
\tcbuselibrary{skins}
 
\newtcolorbox{greentheorem}[1][]{
  enhanced,
  boxrule=0pt,
  frame hidden,
  colback=green!4, 
  sharp corners,
  top=3mm, bottom=3mm, left=3mm, right=3mm,
  #1
}
 
\newtcolorbox{bluetheorem}[1][]{
  enhanced,
  boxrule=0pt,
  frame hidden,
  colback=blue!4, 
  sharp corners,
  top=3mm, bottom=3mm, left=3mm, right=3mm,
  #1
}

\title{Certified Approximation for Interpretable \\ Representer Landmarks}

\author{Jayanta Mukherjee\thanks{Corresponding author} \; Shourya Verma, \; Mengbo Wang, \; Jasorsi Ghosh, \;
 Ananth Grama \\
Department of Computer Science \\
Purdue University \\
West Lafayette, IN 47907, USA \\
\texttt{\{jmukher,verma198,wang4887,ghosh117\}@purdue.edu} \quad
\texttt{ayg@cs.purdue.edu}
}

\iclrfinalcopy 

\begin{document}

\fancyhf{}                          
\fancyfoot[C]{\thepage}             
\renewcommand{\headrulewidth}{0pt}  

\maketitle

\begin{abstract}
Representer explanations rank the training landmarks that most influence a self-supervised representation. At scale, this ranking rests on up to four stacked approximations of the empirical neural tangent kernel (eNTK). These are random output heads, a parameter sketch, landmark sampling and a coefficient fit. Existing analyses bound each approximation separately, but none certifies the top-$K$ set against their combined error. We introduce \method{} (Certified Approximation for Interpretable Representer laNdmarks), a framework that carries this error through to the ranking. We derive the exact variance of the sketched multi-head eNTK, which matches measurement within $4\%$ where Johnson--Lindenstrauss bounds err by up to $2.5\times$. This yields a high-probability top-$K$ certificate for a fixed coefficient fit, alongside exact residual-trace certificates for discarded spectral mass. An exact product-variance identity separates kernel error from fit variability and identifies when a larger kernel budget can still sharpen a ranking. Stochastic Lanczos Quadrature (SLQ) estimates the effective dimension within $0.72\%$ and guides the landmark budget without dense eigendecomposition. We show that residual mass does not control class coverage, and residual-greedy selection cuts the worst coverage excess of $k$-means++ from $8.5\times$ to $1.55\times$ ($4\times$ on the sketched eNTK). Cross-view initializers outperform principal-component initialization in five (AUI) to all six (CSI) settings. Against the KREPES Gauss--Newton solver, \method{} converges $2.5$ to $11.3\times$ faster, trails by at most $0.31$ points and gains up to $3.14$ points on MNIST. Together, these results make the reliability of representer explanations measurable and show where approximation budgets are best spent.
\end{abstract}

\section{Introduction}
\label{sec:intro}

The representer theorem states that the minimizer of a regularized empirical risk over a reproducing kernel Hilbert space is a finite kernel expansion over the training points \citep{kimeldorf1970correspondence,scholkopf2001representer}.
Representer points \citep{yeh2018representer} and influence functions \citep{koh2017influence} carry this idea to deep networks, but only for supervised losses.
The eNTK \citep{jacot2018ntk,entk-mohamadi23a} turns a trained network into a kernel machine, and Nystr\"om landmarks \citep{williams2001nystrom,seeger2003fast,alaoui2015ridge,rudi2015less} make these kernels scalable as random projections do for gradient attribution~\citep{trak-park23c,NEURIPS2024_4268cf6b}.
KREPES \citep{krepes2026} combines these concepts for self-supervised learning (SSL), where no labels exist to anchor explanations.
It expands the eNTK of a frozen backbone over a few landmarks, fits their coefficients to an SSL objective, and ranks landmarks by coefficient displacement.
The ranked list corresponds to the explanation, and in controlled settings its top landmarks share the query's class and expose demographic proxies.
Deleting the top-ranked landmarks also collapses model confidence more than deleting random ones \citep{krepes2026}.

\emph{We treat reliability as a property of the ranked list, not of the kernel}. For a query $x_t$, KREPES ranks the Nystr\"om landmarks $x_\ell$ by the sample-specific influence score that measures how far landmark $\ell$'s coefficients move during the self-supervised fit, and returns the top-$K$ landmarks as the explanation. For large networks, the ranked list rests on four stacked approximations: random output heads, a Subsampled Randomized Hadamard Transform (SRHT) sketch \citep{ailon2006srht,tropp2011srht}, a Nystr\"om landmark sample and a regularized coefficient fit.

Prior analyses bound each approximation alone against a matrix norm \citep{gittens2016nystrom,woodruff2014sketching}, but a small kernel error does not imply a stable top-$K$ set. The output heads alone can be coarse, since a single head, as in standard pipelines including KREPES, leaves relative variance $2/r_{\rm tr}$ in each kernel entry, where $r_{\rm tr}$ is the trace-concentration ratio of the (symmetrized) cross-Jacobian block $J_xJ_{x'}^\top$ of a single input pair. 
Two design choices also warrant investigation: landmark pools are judged by residual rather than class coverage, and principal-component initialization is assumed to beat a random start. \method{} treats the ranked list as the object to certify and composes exact second moments of each approximation up to the ranking.
It replaces the analytical KREPES solver with first-order fitting at a fraction of the wall-clock time, and makes the following five contributions.

\textbf{(i) Exact error model.}
We derive the SRHT variance in closed form, matching measurement within $4\%$ in experiments, where Johnson--Lindenstrauss (JL) bounds err by up to $2.5\times$. With Gaussian heads, our method yields an exact two-axis decomposition in which part of the parameter variance does not shrink with more heads.

\textbf{(ii) Top-$K$ certificates.}
A fixed-fit certificate bounds kernel error, and a repeated-fit certificate adds a coefficient-fit floor that flags when a larger kernel budget can no longer help.

\textbf{(iii) Residual does not control coverage.}
Selectors tie on residual yet differ tenfold in coverage, and pivoted Cholesky \citep{harbrecht2012cholesky} cuts the worst coverage excess from $8.5\times$ to $1.55\times$.

\textbf{(iv) Contrastive initialization.}
Our cross-view initializers beat principal-component initialization in $6$ (\textsc{CSI}) and $5$ (\textsc{AUI}) of $6$ settings, with gains of up to $3.8$ points on MNIST that exceed the spread across landmark selectors under Barlow Twins and VICReg.

\textbf{(v) Fast first-order fitting.}
Spectrum-restricted Lanczos replaces dense landmark eigendecomposition, reproducing the dense eigenbasis to $10^{-13}$ while reducing working-set memory requirement. With identical initialization, objective, kernel blocks, and landmarks, \method{} achieves comparable or higher accuracy than KREPES while running $2.5$ to $11.3\times$ faster than its Gauss--Newton solve.

CAIRN's novelty lies not in new sketching primitives such as SRHT, Nyström, or Lanczos, but in a framework that propagates approximation uncertainty through the eNTK representer pipeline to the final landmark ranking, including a coefficient-fit component that no kernel budget can remove.

\section{Related Work}
\label{sec:related}
\textbf{Self-supervised learning and collapse.}
Joint-embedding SSL learns by aligning augmented views, either contrastively \citep{chen2020simclr} or through asymmetries and statistical penalties that prevent collapse \citep{grill2020byol,zbontar2021barlow,bardes2022vicreg}. \citet{jing2022collapse} showed that these objectives still suffer dimensional collapse. We treat this effect as an input, not a pathology: since output-axis variance scales as $2/(h,r_{\rm tr}(\bar M))$ (\Cref{prop:two-axis}), a more concentrated eNTK spectrum directly permits fewer output heads than the single-head approximation used by most eNTK pipelines \citep{entk-mohamadi23a,krepes2026}.

\textbf{Representers and attribution.}
The representer theorem \citep{scholkopf2001representer,wahba2019representer} expresses the minimizer of a regularized kernel objective as a finite expansion in the training points, and \citet{belkin2006manifold} extended such expansions to unlabeled data. \citet{yeh2018representer} brought representer points to deep classifiers. Influence functions \citep{hampel1974influence,koh2017influence,trak-park23c} attribute predictions through curvature-weighted gradient inner products, and the eNTK \citep{jacot2018ntk} makes both tractable for deep networks. KREPES \citep{krepes2026} combines these ingredients for SSL through a Nystr\"om expansion and a single Gauss--Newton step. Empirical studies show that
attributions can be fragile across training runs \citep{basu2021influence,NEURIPS2023_ca774047}. Closest to our goal, \citet{hu2026unified} and \citet{tong2026imperfect} show that the random projection and TRAK's approximations largely preserve influence scores and their ranking. 

\textbf{Nystr\"om and sketching.}
Nystr\"om approximation \citep{williams2001nystrom} is analyzed through spectral bounds \citep{gittens2016nystrom} and computational regularization \citep{rudi2015less,rudi2017falkon}. Landmarks are chosen by leverage scores \citep{alaoui2015ridge,musco2017recursive,rudi2018bless}, adaptive rules \citep{calandriello2017distributed,patel2015oasis} or pivoted Cholesky \citep{harbrecht2012cholesky}. Its one-sidedness $\widetilde Q\preceq K_{nn}$
is classical \citep{hornjohnson2013,quinonero2005unifying}. SRHT \citep{ailon2006srht,tropp2011srht,boutsidis2013srht,balabanov2022blocksrht} realizes JL embeddings \citep{johnson1984jl} in $O(P\log P)$ time, and is analyzed through concentration bounds \citep{halko2011randomized,woodruff2014sketching,drineas2012leverage}. Landmarks may also be sampled uniformly or adaptively~\citep{kumar2012sampling} or placed at $k$-means~\citep{kmeanspp} centers~\citep{zhang2008icml-improved}.
Sketched curvature appears in the Newton sketch \citep{pilanci2017newton}, SCOD \citep{sharma2021scod}, low-rank GGN access \citep{dangel2022vivit} and sketched Lanczos uncertainty \citep{miani2024sketchedlanczos}. \citet{kunstner2019fisher} warn against conflating curvature surrogates with the Hessian. Our initializers relate to canonical correlation analysis \citep{hotelling1936relations} and spectral contrastive learning \citep{haochen2021provable}. Prior analyses bound single, supervised approximations; CAIRN composes the second moments of four approximations and certifies the resulting top-$K$ set.

\section{Method and Theory}
\label{sec:theory}

\begin{figure*}[t]
\centering
\begin{tikzpicture}[
  x=0.995cm, y=1cm,
  font=\sffamily\scriptsize,
  >={Latex[length=4pt,width=3.5pt]},
  lm/.style={circle, inner sep=0pt, minimum size=3.4pt, fill=#1},
  sm/.style={-{Stealth[length=2.5pt,width=2.5pt]}},
]
\cairnpanel{0}{3.8}{cK}{1}{eNTK sketch \& landmarks}
\foreach \offs in {0.2,0.1,0}{
  \draw[cK, fill=cK!20, line width=0.4pt] ({0.35+\offs},{1.05+\offs}) rectangle ({0.55+\offs},{2.75+\offs});
  \draw[cK, fill=cK!45, line width=0.4pt] ({1.5+\offs},{1.6+\offs}) rectangle ({1.7+\offs},{2.2+\offs});
}
\foreach \ypos in {1.25,1.45,1.65,1.85,2.05,2.25,2.45}
  \draw[cK!55, line width=0.3pt] (0.35,\ypos) -- (0.55,\ypos);
\foreach \ypos in {1.8,2.0}
  \draw[cK!75, line width=0.3pt] (1.5,\ypos) -- (1.7,\ypos);
\draw[->, cK, line width=0.9pt] (0.85,1.9) -- (1.42,1.9);
\node[font=\sffamily\tiny, text=cK] at (1.13,2.03) {SRHT};
\node[font=\tiny] at (1.13,1.77) {$s\!\ll\!P$};
\node[font=\sffamily\tiny, anchor=west] at (0.8,2.8) {$h$ heads};
\node[font=\tiny] at (0.8,0.85) {$J_x^{\!\top}w_j\in\mathbb R^{P}$};
\node[font=\tiny] at (1.7,1.38) {$SJ_x^{\!\top}w_j$};
\foreach \ptx/\pty/\clr in {2.7/1.5/lC,3.2/2.6/lB,2.35/2.25/lA}
  \fill[\clr, opacity=0.13] (\ptx,\pty) circle (0.34);
\foreach \ptx/\pty in {2.2/2.5,2.45/2.65,2.35/2.25,2.6/2.4,2.3/2.0}
  \node[lm=lA] at (\ptx,\pty) {};
\foreach \ptx/\pty in {3.2/2.6,3.5/2.45,3.35/2.2,3.55/2.72,3.1/2.35}
  \node[lm=lB] at (\ptx,\pty) {};
\foreach \ptx/\pty in {2.5/1.3,2.8/1.15,2.7/1.5,3.0/1.4,3.35/1.25,3.15/1.65}
  \node[lm=lC] at (\ptx,\pty) {};
\foreach \ptx/\pty/\num in {2.7/1.5/1,3.2/2.6/2,2.35/2.25/3}{
  \draw[black!80, line width=0.5pt] (\ptx,\pty) circle (2.6pt);
  \node[font=\sffamily\tiny, anchor=south west, inner sep=0.5pt] at ({\ptx+0.04},{\pty+0.04}) {\num};
}
\node[font=\sffamily\tiny, text=black!70] at (2.55,3.0) {greedy pivots};
\node[font=\tiny] at (2.88,0.85) {$m=d_{\rm eff}(\lambda)$};
\node[font=\tiny] at (1.9,0.35) {$\operatorname{Var}K_{h,s}=$ output $+$ parameter axis};
\cairnpanel{4.25}{7.2}{cF}{2}{Coefficient fit}
\node at (5.725,2.92) {$z_{A,B}=K_{A,B}\,\tilde A+b$};
\foreach \lab/\xpos in {PCI/4.6,CSI/5.13,AUI/5.66,Kaiming/6.4}
  \node[draw=cF, fill=white, rounded corners=1.5pt, font=\sffamily\tiny, inner sep=1.3pt, text=cF!70!black] at (\xpos,2.5) {\lab};
\draw[black!20, rounded corners=2pt] (4.45,1.0) rectangle (7.0,2.25);
\foreach \sxa/\sya/\exa/\eya/\clr in {4.75/1.2/5.25/1.75/lA,5.2/1.3/5.4/1.45/lA,5.7/1.2/6.3/1.55/lB,
                                     6.0/1.95/6.22/1.82/lB,6.75/1.35/6.55/1.95/lC,5.6/1.72/5.8/1.6/lC}{
  \draw[sm, \clr, line width=0.7pt, shorten <=1.6pt, shorten >=1.8pt] (\sxa,\sya) -- (\exa,\eya);
  \draw[\clr, fill=white, line width=0.5pt] (\sxa,\sya) circle (1.6pt);
  \fill[\clr] (\exa,\eya) circle (1.6pt);
}
\node[font=\tiny, text=lA!80!black] at (4.8,1.6) {$\omega_\ell$};
\node[anchor=north west, font=\tiny, inner sep=1pt] at (4.47,2.23) {$\circ\,\tilde A_0\to\bullet\,\tilde A$};
\node[font=\sffamily\tiny] at (5.725,0.75) {BT / SimCLR / VICReg};
\node[font=\tiny] at (5.725,0.35) {$\omega_\ell=\|\tilde A_{\ell,:}-\tilde A_{0,\ell,:}\|_2$};
\cairnpanel{7.65}{10.6}{cR}{3}{Influence ranking}
\node at (9.125,2.92) {$I_{\ell t}=k(x_\ell,x_t)\,\omega_\ell$};
\foreach \rk/\ypos/\len/\clr/\opa in {1/2.55/2.3/lA/1,2/2.33/2.05/lB/1,3/2.11/1.8/lC/1,
                                     4/1.75/1.1/lA/0.35,5/1.53/0.95/lC/0.35,6/1.31/0.75/lB/0.35,7/1.09/0.55/lA/0.35}{
  \fill[\clr, opacity=\opa] (8.0,{\ypos-0.075}) rectangle ({8.0+\len},{\ypos+0.075});
  \node[font=\sffamily\tiny, text=black!60, anchor=east] at (7.95,\ypos) {\rk};
}
\draw[cR, dashed, line width=0.6pt] (7.75,1.93) -- (10.5,1.93);
\node[font=\sffamily\tiny, text=cR, anchor=south east, inner sep=1pt] at (10.5,1.93) {top-$K$};
\draw[{Stealth[length=2.5pt]}-{Stealth[length=2.5pt]}, black!60, line width=0.5pt] (9.12,1.75) -- (9.8,1.75);
\node[font=\sffamily\tiny, text=black!60] at (9.46,1.63) {gap};
\node[font=\sffamily\tiny] at (9.125,0.72) {ranking for query $x_t$};
\node[font=\sffamily\tiny] at (9.125,0.35) {\textcolor{cK}{kernel noise} $\times$ \textcolor{cF}{fit noise}};
\cairnpanel{11.05}{14.0}{cC}{4}{Certify \& audit}
\node[font=\sffamily\tiny] at (11.8,3.0) {fixed fit};
\node[font=\sffamily\tiny] at (13.15,3.0) {repeated fits};
\fill[cC, opacity=0.15] (11.2,1.72) rectangle (12.4,2.03);
\draw[black!50, dashed, line width=0.5pt] (11.8,1.05) -- (11.8,2.7);
\foreach \xpos/\val/\clr in {11.3/2.45/cC,11.5/2.3/cC,11.7/2.15/cC,11.9/1.6/black!45,12.1/1.45/black!45,12.3/1.3/black!45}{
  \draw[\clr, line width=0.7pt] (\xpos,{\val-0.12}) -- (\xpos,{\val+0.12});
  \draw[\clr, line width=0.5pt] ({\xpos-0.04},{\val-0.12}) -- ({\xpos+0.04},{\val-0.12});
  \draw[\clr, line width=0.5pt] ({\xpos-0.04},{\val+0.12}) -- ({\xpos+0.04},{\val+0.12});
  \fill[\clr] (\xpos,\val) circle (1.3pt);
}
\fill[red!70!black, opacity=0.18] (12.55,1.8) rectangle (13.75,2.0);
\draw[black!50, dashed, line width=0.5pt] (13.15,0.95) -- (13.15,2.85);
\foreach \xpos/\val in {12.65/2.4,12.85/2.25,13.05/2.1,13.25/1.7,13.45/1.55,13.65/1.4}{
  \draw[cK, line width=0.6pt] (\xpos,{\val-0.42}) -- (\xpos,{\val+0.42});
  \draw[cF, line width=1.8pt] (\xpos,{\val-0.3}) -- (\xpos,{\val+0.3});
  \fill[black!75] (\xpos,\val) circle (1.1pt);
}
\node[font=\sffamily\tiny, text=cC] at (11.8,0.78) {$\checkmark$ certified};
\node[font=\sffamily\tiny, text=red!70!black] at (13.2,0.72) {$\times$ floor overlap};
\node[font=\sffamily\tiny] at (12.525,0.33) {audit vs.\ 20 random};
\draw[->, line width=1.2pt, black!45] (3.84,1.85) -- (4.21,1.85);
\draw[->, line width=1.2pt, black!45] (7.24,1.85) -- (7.61,1.85);
\draw[->, line width=1.2pt, black!45] (10.64,1.85) -- (11.01,1.85);
\node[anchor=west, font=\sffamily\itshape\tiny, text=black!55] at (0.05,-0.42) {error propagation};
\draw[cF, line width=0.9pt, ->, rounded corners=3pt] (5.725,0) -- (5.725,-0.28) -- (13.15,-0.28) -- (13.15,-0.02);
\draw[cK, line width=0.9pt, ->, rounded corners=3pt] (1.9,0) -- (1.9,-0.56) -- (13.6,-0.56) -- (13.6,-0.02);
\draw[white, line width=3pt] (11.8,-0.5) -- (11.8,-0.06);
\draw[cK, line width=0.9pt, ->] (11.8,-0.56) -- (11.8,-0.02);
\node[font=\sffamily\tiny, text=cF, fill=white, inner sep=1.5pt] at (9.4,-0.28)
  {fit variance $\tau_\ell^2$ sets the floor $r^{\rm fit}_\ell$ (Thm.~\ref*{thm:fit-floor})};
\node[font=\sffamily\tiny, text=cK, fill=white, inner sep=1.5pt] at (6.9,-0.56)
  {exact kernel variance $V_{\ell t}$ sets the radius $\varepsilon_\ell$ (Prop.~\ref*{prop:two-axis})};
\end{tikzpicture}
\caption{\textbf{\method{} pipeline.} Stage~1 sketches the eNTK and selects landmarks by residual-greedy pivoted Cholesky under a $d_{\rm eff}(\lambda)$ budget. Stage~2 fits the coefficients from one of four initializers, and Stage~3 ranks landmarks by $I_{\ell t}$. Stage~4 certifies the top-$K$ set, flags fit-limited rankings and audits scores by deletion.}
\label{fig:cairn-architecture}
\end{figure*}

Given a trained network with $d$ outputs and $P$ parameters, a representer explanation expands its eNTK in $m$ landmarks, fits coefficients
$\tilde A$ from an initialization $\tilde A_0$, and ranks the landmarks for a query
$x_t$ by
\begin{equation}
I_{\ell t}=k(x_\ell,x_t)\,\omega_\ell,\qquad \omega_\ell=\norm{\tilde A_{\ell,:}-\tilde A_{0,\ell,:}}_2 ,
\label{eq:influence}
\end{equation}
where $k$ is the kernel of the fitted expansion, $\tilde A$ is the fitted Nyström coefficient matrix and $\tilde A_{0}$ is its initialization. 
At scale $k$ is not computed exactly, so for a query $x_t$ we ask whether, with probability at least $1-\delta$, the top-$K$ landmarks under the approximated scores are the top-$K$ under the exact-kernel scores, and, when they cannot be certified, which approximation (heads, sketch, landmarks or fit) is responsible. 
\method{} approximates the eNTK with $h$ Gaussian output heads and an SRHT of width $s$, selects the landmarks under an effective-dimension budget, and fits $\tilde A$ from a chosen initialization. We derive exact second moments of the randomized kernel (\Cref{sec:error-variance}) and carry them to two top-$K$ certificates (\Cref{sec:2d}): the first certifies kernel approximation for a given fit, and the second exposes a coefficient-fit floor across repeated fits. \Cref{sec:selection,sec:init} analyze landmark selection, budget and initialization; proofs are in \Cref{app:proofs}. The symbols are defined in \Cref{app:notation}.

\subsection{An exact error model for the sketched kernel}
\label{sec:error-variance}

Let $J_x\in\R^{d\times P}$ be a fixed Jacobian, zero-padded so that $P\ge2$ is a power of two.
Let $D$ be diagonal with independent Rademacher entries, $H$ the normalized Walsh--Hadamard matrix, and $R$ a uniform selection of $s$ of the $P$ rows without replacement.
The row sample, the signs, and the Gaussian heads are mutually independent, and the sketch and composed kernel are given by:
\begin{equation}
S=\sqrt{P/s}\,RHD,\quad
K_{h,s}(x,x')=\frac1h\sum_{j=1}^{h}\bigl\langle SJ_x^\top w_j,SJ_{x'}^\top w_j\bigr\rangle,\quad w_j\stackrel{\rm iid}{\sim}\mathcal N(0,d^{-1}I_d).
\label{eq:composed}
\end{equation}
Its mean is the output-averaged eNTK $k^\star(x,x')=\tr(M)/d$, where $M=J_xJ_{x'}^\top$ is the $d\times d$ cross-Jacobian of the pair.

\begin{greentheorem}
\begin{theorem}[Exact SRHT variance]
\label{thm:srht-exact}
For $1\le s\le P$ and any $C\in\R^{P\times P}$, the estimate $\tr(S^\top SC)$ is unbiased for $\tr C$ and
\begin{equation}
\Var\bigl[\tr(S^\top SC)\bigr]=\tfrac{P-s}{(P-1)s}\,\Phi(C),\quad
\Phi(C)=2\bigl(\norm{\bar C}_F^2-\textstyle\sum_{j}C_{jj}^2\bigr),\quad \bar C=\tfrac12(C+C^\top).
\label{eq:srht-exact}
\end{equation}
For $C=vu^\top$ this is the inner-product estimate $\langle Su,Sv\rangle$, with $\Phi=\norm u^2\norm v^2+\langle u,v\rangle^2-2\sum_ju_j^2v_j^2$.
\end{theorem}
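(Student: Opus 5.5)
The plan is to reduce to the transformed matrix $B=HDCDH^\top$ and exploit the fact that $R$ samples coordinates without replacement. Since $S^\top S=\tfrac{P}{s}DH^\top R^\top RHD$ and $R^\top R=\mathrm{diag}(\xi)$ with $\xi\in\{0,1\}^P$ the indicator of a uniform $s$-subset, cyclicity of the trace gives
\[
\tr(S^\top SC)=\tfrac{P}{s}\sum_{i}\xi_i B_{ii},\qquad B=HDCDH^\top .
\]
Conditional on $D$, this is the classical Horvitz--Thompson estimator of $\sum_i B_{ii}=\tr(HDCDH^\top)=\tr C$, using $H^\top H=I$ and $D^2=I$. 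Since $\E\xi_i=s/P$, unbiasedness follows immediately.

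For the variance I would use the law of total variance. The conditional mean $\tr C$ does not depend on $D$, so the variance equals $\E_D$ of the conditional variance. For sampling without replacement, $\Var(\xi_i)=\tfrac sP(1-\tfrac sP)$ and $\Cov(\xi_i,\xi_j)=-\tfrac{s(P-s)}{P^2(P-1)}$ for $i\neq j$. This yields the finite-population formula
\[
\Var\Bigl[\tfrac Ps\textstyle\sum_i\xi_iB_{ii}\,\Big|\,D\Bigr]=\tfrac{P(P-s)}{s(P-1)}\Bigl(\textstyle\sum_iB_{ii}^2-\tfrac1P(\tr C)^2\Bigr).
\]
It therefore remains to compute $\E_D\sum_i B_{ii}^2$. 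Only $\bar C$ contributes to diagonals, so I will write $B_{ii}=\sum_{a,b}H_{ia}H_{ib}D_aD_b\bar C_{ab}$. Expanding $B_{ii}^2$ and using $\E[D_aD_bD_cD_d]$, which is nonzero only when indices pair up, leaves three pairings: $a=b,c=d$; $a=c,b=d$; and $a=d,b=c$. The overlapping case $a=b=c=d$ is counted thrice and needs an inclusion--exclusion correction. With $H_{ia}^2=1/P$ this gives
\[
\E B_{ii}^2=\tfrac1{P^2}\Bigl((\tr C)^2+2\norm{\bar C}_F^2-2\textstyle\sum_a C_{aa}^2\Bigr),
\]
independent of $i$. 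Summing over $i$ and substituting cancels the $(\tr C)^2$ terms, leaving $\tfrac{P-s}{(P-1)s}\cdot 2(\norm{\bar C}_F^2-\sum_jC_{jj}^2)$, which is exactly $\Phi(C)$.

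The main obstacle is the fourth-moment bookkeeping over Rademacher pairings. The diagonal coincidence correction ($-2\sum C_{aa}^2$) and the symmetrization must be handled carefully, since the cross pairings produce $\bar C_{ab}^2$ and not $C_{ab}C_{ba}$ separately. Everything else is routine. The rank-one case then follows by substituting $C=vu^\top$, noting $\tr(S^\top S vu^\top)=\langle Su,Sv\rangle$, $\norm{\bar C}_F^2=\tfrac12(\norm u^2\norm v^2+\langle u,v\rangle^2)$ and $C_{jj}=u_jv_j$. Finally, the edge case $s=P$ gives zero variance, consistent with the formula, and $P\ge2$ ensures $P-1\neq0$.
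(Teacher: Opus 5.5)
Your proposal is correct and follows the paper's proof essentially step for step. Both write the estimate as $\tfrac Ps\sum_{t\in T}(HDCDH)_{tt}$ and apply the finite-population variance conditional on $D$; the outer variance vanishes because the conditional mean is always $\tr C$, and $\E(HDCDH)_{jj}^2$ is then computed from Rademacher pairings with $H_{jk}^2=1/P$. The only cosmetic difference is that you symmetrize to $\bar C$ before expanding and correct the $a=b=c=d$ overlap by inclusion--exclusion, whereas the paper expands in $C$ directly and identifies $\sum_{k\neq l}(C_{kl}^2+C_{kl}C_{lk})=\Phi(C)$ afterwards.
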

\end{greentheorem}

The identity keeps two terms that JL bounds discard: the finite-population factor and the diagonal correction.
The factor $(P-s)/(P-1)$ makes the variance vanish at $s=P$, where the sketch is an exact rotation.
The diagonal correction lowers the variance most for coordinate-concentrated inputs, which the Hadamard rotation spreads evenly before sampling.
For example, $u=v=e_j$ gives $\Phi=0$, so the SRHT estimate of $\norm{e_j}^2$ is exact at every width $s$.
\Cref{fig:theory}(a,b) confirms the identity to within $0.96$--$1.03$ of measurement, while the JL rate over-predicts by up to $2.5\times$.

\begin{proposition}[Two-axis decomposition]
\label{prop:two-axis}
Let $C_1=J_{x'}^\top w_1w_1^\top J_x$ with mean $\bar C_1=J_{x'}^\top J_x/d$, and let $\bar M=\frac12(M+M^\top)$ be the symmetric part of $M$.
Under Eq. \ref{eq:composed} with one sketch shared by all heads,
\begin{equation}
\Var(K_{h,s})=\underbrace{\frac{2\norm{\bar M}_F^2}{h\,d^2}}_{\text{output axis}}
+\underbrace{\frac{P-s}{(P-1)s}\Bigl(\Phi(\bar C_1)+\frac{\E\Phi(C_1)-\Phi(\bar C_1)}{h}\Bigr)}_{\text{parameter axis}},
\label{eq:two-axis}
\end{equation}
and both bracketed terms are nonnegative because $\Phi$ is a positive semidefinite quadratic form.
When $\tr\bar M\neq0$, the output axis contributes $2/(h\,r_{\rm tr}(\bar M))$ to the relative variance, where $r_{\rm tr}(B)=\tr(B)^2/\norm{B}_F^2$.
\end{proposition}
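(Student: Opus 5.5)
The plan is to apply the law of total variance, conditioning on the Gaussian heads $W=(w_1,\dots,w_h)$. Write $C_W=\frac1h\sum_j C_j$ with $C_j=J_{x'}^\top w_jw_j^\top J_x$. Then $K_{h,s}=\tr(S^\top S C_W)$, since $\langle SJ_x^\top w_j,SJ_{x'}^\top w_j\rangle=\tr(S^\top S\,J_{x'}^\top w_jw_j^\top J_x)$. The sketch is shared by all heads and independent of $W$, so we split
\[
\Var(K_{h,s})=\Var_W\bigl(\E_S[K_{h,s}\mid W]\bigr)+\E_W\bigl[\Var_S(K_{h,s}\mid W)\bigr].
\]

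\textbf{First term (output axis).} By \Cref{thm:srht-exact}, $\E_S[K\mid W]=\tr C_W=\frac1h\sum_j w_j^\top M w_j$, since $\tr(J_{x'}^\top w_jw_j^\top J_x)=w_j^\top J_xJ_{x'}^\top w_j$. The summands are i.i.d., so this conditional mean has variance $\frac1h\Var(w^\top M w)=\frac1h\Var(w^\top\bar M w)$. For $w\sim\mathcal N(0,d^{-1}I)$ the Gaussian quadratic-form identity gives $\Var(w^\top\bar M w)=2\norm{\bar M}_F^2/d^2$, which is the output-axis term.

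\textbf{Second term (parameter axis).} \Cref{thm:srht-exact} also gives $\Var_S(K\mid W)=\frac{P-s}{(P-1)s}\Phi(C_W)$. It remains to compute $\E\Phi(C_W)$. $\Phi$ is a quadratic form in $C$: it equals $2(\norm{\bar C}_F^2-\norm{\operatorname{diag}C}^2)$, which comes from the bilinear form $\langle\cdot,\cdot\rangle_\Phi$ obtained by polarization.

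To see that $\Phi$ is positive semidefinite, note that $\norm{\bar C}_F^2\ge\sum_j C_{jj}^2$ because the diagonal of $\bar C$ equals the diagonal of $C$. Equivalently, $\Phi(C)\ge0$ is forced as a variance by \Cref{thm:srht-exact}.

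Write $C_W=\bar C_1+\frac1h\sum_j(C_j-\bar C_1)$, where the centered terms are i.i.d. with mean zero. Expanding the quadratic form, the cross terms vanish in expectation, giving
\[
\E\Phi(C_W)=\Phi(\bar C_1)+\frac1h\,\E\Phi(C_1-\bar C_1).
\]
The same expansion shows $\E\Phi(C_1-\bar C_1)=\E\Phi(C_1)-\Phi(\bar C_1)$, which yields Eq.~\ref{eq:two-axis}.

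\textbf{Nonnegativity.} $\Phi(\bar C_1)\ge0$ holds directly. $\E\Phi(C_1)-\Phi(\bar C_1)=\E\Phi(C_1-\bar C_1)\ge0$ by positive semidefiniteness. The output axis is trivially nonnegative.

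\textbf{Relative variance.} The mean is $k^\star=\tr M/d=\tr\bar M/d$. Dividing the output-axis term by $(k^\star)^2$ gives $2\norm{\bar M}_F^2/(h\,\tr(\bar M)^2)=2/(h\,r_{\rm tr}(\bar M))$.

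\textbf{Main obstacle.} The delicate step is the bookkeeping in the quadratic-form expansion of $\E\Phi(C_W)$. $\Phi$ must be treated as a genuine quadratic form on nonsymmetric matrices, so the symmetrization and the diagonal correction must both pass through polarization. One must also check that the cross terms vanish using only the independence of the heads, with no Gaussian moment computations required for that step.
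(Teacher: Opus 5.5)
Your proposal is correct and follows essentially the same route as the paper. It conditions on the heads and applies the law of total variance, taking the output axis from the Gaussian quadratic-form variance and the parameter axis from \Cref{thm:srht-exact} plus the i.i.d.\ expansion of the quadratic form $\Phi$. The only cosmetic difference is that you get $\E\Phi(C_1)\ge\Phi(\bar C_1)$ from the identity $\E\Phi(C_1)-\Phi(\bar C_1)=\E\Phi(C_1-\bar C_1)\ge0$, where the paper cites Jensen; for a positive semidefinite quadratic form these are the same fact.
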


The ratio $r_{\rm tr}$ measures trace concentration and differs from the stable rank $\norm B_F^2/\norm B_2^2$.
For an indefinite cross-kernel, trace cancellation can make $r_{\rm tr}$ small and the relative output variance correspondingly large.
Heads shrink the output axis and the $1/h$ share of the parameter axis, but $\Phi(\bar C_1)$ does not depend on $h$.
Only a wider sketch removes that term, or an independent sketch per head at $h$ times the sketching cost.
On an eNTK block, the unsketched cells match the output axis within $7\%$, and the sketched excess falls with $h$ as Eq. \ref{eq:two-axis} predicts (\Cref{app:2b}).

\subsection{From kernel error to ranking certificates}
\label{sec:2d}

We separate two targets: the ranking of a fixed fitted model under kernel approximation, and the population ranking across repeated fits.
The same kernel moments enter both certificates, while coefficient-fit variability enters only the second.

\textbf{Fixed-fit certificate.}
Let $\mathcal F$ contain the data, backbone, landmarks $\mathcal Z$, initialization, fitted coefficients and a fixed query $x_t$, so each $\omega_\ell$ is fixed given $\mathcal F$.
We fix $h$, $s$, $K$ and the error allocation, then draw fresh heads and an SRHT independent of $\mathcal F$ and define the exact-kernel scores.
\begin{equation}\label{eq:conditional-score}
    I^\star_{\ell t}=k^\star(x_\ell,x_t)\omega_\ell,
\widehat I_{\ell t}=K^{\rm eval}_{h,s}(x_\ell,x_t)\omega_\ell
\end{equation} 
Then $\E[\widehat I_{\ell t}\mid\mathcal F]=I^\star_{\ell t}$ and $\Var(\widehat I_{\ell t}\mid\mathcal F)=\omega_\ell^2 V_{\ell t}$, where $V_{\ell t}$ is the variance (Eq. \ref{eq:two-axis}) for the pair $(x_\ell,x_t)$. The reference $I^\star_{\ell t}$ keeps the fitted coefficients.

\begin{greentheorem}
\begin{theorem}[Conditional top-$K$ certificate]
\label{thm:topk}
Assume the protocol above with $1\le K<m$ and $0<\delta<1$, and let $\overline V_{\ell t}\ge V_{\ell t}$ be valid upper bounds fixed given $\mathcal F$.
Choose $\delta_\ell>0$ with $\sum_{\ell=1}^m\delta_\ell\le\delta$, and set
$\varepsilon_\ell=\omega_\ell\sqrt{\overline V_{\ell t}/\delta_\ell}$. With probability at least $1-\delta$ given $\mathcal F$, all $m$ scores satisfy $|\widehat I_{\ell t}-I^\star_{\ell t}|\le\varepsilon_\ell$ simultaneously.
Let $\widehat T=\mathrm{Top}_K(\widehat I_{\cdot t})$, and issue a certificate only when
\begin{equation}
\min_{\ell\in\widehat T}(\widehat I_{\ell t}-\varepsilon_\ell)>\max_{j\notin\widehat T}(\widehat I_{jt}+\varepsilon_j),
\label{eq:observed-separation}
\end{equation}
returning \emph{uncertified} otherwise.
An incorrect certificate is then issued with probability at most $\delta$, and no independence between landmark scores is required.
\end{theorem}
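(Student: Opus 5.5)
The plan is to condition on $\mathcal F$ throughout, so that each $\omega_\ell$, each $I^\star_{\ell t}$ and each bound $\overline V_{\ell t}$ is a constant and the only randomness comes from the freshly drawn heads and SRHT. By the protocol, $\E[\widehat I_{\ell t}\mid\mathcal F]=I^\star_{\ell t}$ and $\Var(\widehat I_{\ell t}\mid\mathcal F)=\omega_\ell^2V_{\ell t}\le\omega_\ell^2\overline V_{\ell t}$, using \Cref{prop:two-axis} for the pair $(x_\ell,x_t)$.

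\textbf{Step 1 (per-landmark deviation).} For each $\ell$, apply Chebyshev's inequality conditionally:
\[
\Pr\bigl(|\widehat I_{\ell t}-I^\star_{\ell t}|>\varepsilon_\ell\mid\mathcal F\bigr)\le\frac{\omega_\ell^2\overline V_{\ell t}}{\varepsilon_\ell^2}=\delta_\ell .
\]
If $\omega_\ell=0$, then $\varepsilon_\ell=0$ and $\widehat I_{\ell t}=I^\star_{\ell t}=0$ deterministically, so the event has probability zero and the bound still holds.

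\textbf{Step 2 (simultaneous control).} A union bound over the $m$ landmarks gives a failure probability of at most $\sum_\ell\delta_\ell\le\delta$. Let $\mathcal E$ be the event that all $m$ deviation bounds hold. Then $\Pr(\mathcal E\mid\mathcal F)\ge1-\delta$. The union bound needs no independence between the scores, which is why the theorem makes no independence assumption even though the same heads and sketch are shared across landmarks.

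\textbf{Step 3 (deterministic correctness on $\mathcal E$).} Suppose $\mathcal E$ holds and the separation condition \eqref{eq:observed-separation} holds. Take any $\ell\in\widehat T$ and $j\notin\widehat T$. Then
\[
I^\star_{\ell t}\ge\widehat I_{\ell t}-\varepsilon_\ell>\widehat I_{jt}+\varepsilon_j\ge I^\star_{jt}.
\]
So every member of $\widehat T$ strictly dominates every non-member under the exact scores. Because $|\widehat T|=K$, this means $\widehat T$ is the unique exact top-$K$ set, and ties in the exact scores cannot make it ambiguous. An incorrect certificate therefore requires $\mathcal E^c$, which has probability at most $\delta$. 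A returned ``uncertified'' is never counted as an error.

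\textbf{Main obstacle.} The one subtle point is validity. The $\varepsilon_\ell$, $\delta_\ell$ and $\overline V_{\ell t}$ must be fixed given $\mathcal F$ before the heads and sketch are drawn. Otherwise data-dependent choices would break the Chebyshev step. The protocol stipulates exactly this, so I would state it explicitly. The rest is routine.
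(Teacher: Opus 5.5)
Your proof is correct and follows the paper's argument essentially step for step: conditional Chebyshev for each landmark, a union bound that needs no independence between scores, and the deterministic chain $I^\star_{\ell t}\ge\widehat I_{\ell t}-\varepsilon_\ell>\widehat I_{jt}+\varepsilon_j\ge I^\star_{jt}$ on the coverage event. One small slip: a zero radius also arises when $\overline V_{\ell t}=0$ with $\omega_\ell>0$, and there your Chebyshev ratio is $0/0$; as the paper notes, any zero radius forces zero variance, hence $\widehat I_{\ell t}=I^\star_{\ell t}$ almost surely, and the bound still holds.
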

\end{greentheorem}

With $\delta_\ell=\delta/m$, an observed gap between the $K$-th and $(K{+}1)$-th scores above $2\max_\ell\varepsilon_\ell$ suffices.
The exact $\Phi$ terms are not needed, since any valid upper bound on $V_{\ell t}$ works, for example
\begin{equation}
V_{\ell t}\le k^\star(x_\ell,x_\ell)\,k^\star(x_t,x_t)\Bigl[\tfrac{2}{h}+\tfrac{P-s}{(P-1)s}\bigl(2+\tfrac{4}{h}\bigr)\Bigr],
\label{eq:conservative-variance}
\end{equation}
where the two diagonal kernel values must be exact or valid upper bounds.
A plug-in sample variance does not give the guarantee, because $\overline V_{\ell t}$ must be fixed before the evaluation draw. The guarantee covers one query and one audit; simultaneous or adaptively repeated audits need a further split of $\delta$.

\textbf{Repeated-fit stability and the coefficient floor.}
Let $\mathcal G$ contain the data, backbone, ordered landmark set $\mathcal Z$ and query $x_t$, and condition every moment and probability below on $\mathcal G$.
Across independent refits, the displacement $\Omega_\ell=\norm{\tilde A_{\ell,:}-\tilde A_{0,\ell,:}}_2$ is random, with mean $\bar\omega_\ell$ and finite variance $\tau_\ell^2$.
For an evaluation budget $\mathcal B=(h,s)$, we draw fresh heads and an SRHT independently of all coefficient-fitting randomness.
Then $K^{\mathcal B}_{\ell t}$ and $\Omega_\ell$ are independent, and the kernel factor has mean $\kappa_{\ell t}=k^\star(x_\ell,x_t)$ and variance $V^{\mathcal B}_{\ell t}$.
The landmarks, fitting protocol, and fit distribution stay fixed as $\mathcal B$ varies, so changing $m$ or the training kernel lies outside this comparison.
The repeated-fit score, its mean and its exact variance are
\begin{gather}
Z_{\ell t}^{\mathcal B}=K_{\ell t}^{\mathcal B}\Omega_\ell,\qquad \mu_{\ell t}=\E Z_{\ell t}^{\mathcal B}=\kappa_{\ell t}\bar\omega_\ell,
\label{eq:repeated-fit-score}\\
\sigma_{\ell t}^2(\mathcal B):=\Var(Z_{\ell t}^{\mathcal B})=V_{\ell t}^{\mathcal B}\bar\omega_\ell^2+\kappa_{\ell t}^2\tau_\ell^2+V_{\ell t}^{\mathcal B}\tau_\ell^2,
\label{eq:product-rule}
\end{gather}
and only the middle term is independent of the kernel budget.

\begin{greentheorem}
\begin{theorem}[Coefficient-fit floor for repeated-fit top-$K$ stability]
\label{thm:fit-floor}
Under the repeated-fit protocol, assume $1\le K<m$ and $0<\delta<1$, and let $T^\star=\mathrm{Top}_K(\mu_{\cdot t})$ under a fixed tie-breaking rule.
Choose $\delta_\ell>0$ with $\sum_\ell\delta_\ell\le\delta$ before any random draw, keep this allocation fixed across all compared budgets, and define the Chebyshev radii
\begin{equation}
r_\ell(\mathcal B)=\sigma_{\ell t}(\mathcal B)/\sqrt{\delta_\ell},\qquad
r_\ell^{\rm fit}=|\kappa_{\ell t}|\tau_\ell/\sqrt{\delta_\ell}.
\label{eq:fit-floor-radius}
\end{equation}
Then $r_\ell(\mathcal B)\ge r_\ell^{\rm fit}$ for every budget, and the separation condition
\begin{equation}
\min_{\ell\in T^\star}\bigl(\mu_{\ell t}-r_\ell(\mathcal B)\bigr)>\max_{j\notin T^\star}\bigl(\mu_{jt}+r_j(\mathcal B)\bigr)
\label{eq:population-separation}
\end{equation}
implies $\Pr[\mathrm{Top}_K(Z_{\cdot t}^{\mathcal B})=T^\star]\ge1-\delta$.
Consequently, if the floor intervals overlap at the cut,
\begin{equation}
\min_{\ell\in T^\star}\bigl(\mu_{\ell t}-r_\ell^{\rm fit}\bigr)\le\max_{j\notin T^\star}\bigl(\mu_{jt}+r_j^{\rm fit}\bigr),
\label{eq:floor-overlap}
\end{equation}
then no evaluation budget makes Eq. \ref{eq:population-separation} hold, and $r_\ell(\mathcal B)\to r_\ell^{\rm fit}$ whenever $V_{\ell t}^{\mathcal B}\to0$.
\end{theorem}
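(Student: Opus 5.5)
The proof follows the template of \Cref{thm:topk}, with the product-variance identity Eq.~\ref{eq:product-rule} supplying the radii. It has four steps: a monotonicity argument, a Chebyshev union bound, an order-preservation step, and a contrapositive.

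\textbf{Step 1: the floor inequality.} The first claim, $r_\ell(\mathcal B)\ge r_\ell^{\rm fit}$, follows directly from Eq.~\ref{eq:product-rule}. Since $V^{\mathcal B}_{\ell t}\ge0$, $\bar\omega_\ell^2\ge0$ and $\tau_\ell^2\ge0$, we get $\sigma^2_{\ell t}(\mathcal B)\ge\kappa_{\ell t}^2\tau_\ell^2$. Taking square roots and dividing by $\sqrt{\delta_\ell}$ gives the inequality.

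I would re-derive Eq.~\ref{eq:product-rule} briefly, because it carries the argument. Independence of $K^{\mathcal B}_{\ell t}$ and $\Omega_\ell$ given $\mathcal G$ gives
\[
\E Z^2=\E K^2\,\E\Omega^2=(V+\kappa^2)(\tau^2+\bar\omega^2).
\]
Subtracting $\mu^2=\kappa^2\bar\omega^2$ leaves exactly the three terms. The same independence gives $\mu_{\ell t}=\kappa_{\ell t}\bar\omega_\ell$. Finite second moments hold by the finite-$\tau_\ell$ assumption and the finite kernel variance from \Cref{prop:two-axis}.

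\textbf{Step 2: the high-probability statement.} Apply Chebyshev to each $Z^{\mathcal B}_{\ell t}$ to get
\[
\Pr\bigl[|Z^{\mathcal B}_{\ell t}-\mu_{\ell t}|>r_\ell(\mathcal B)\bigr]\le\sigma^2_{\ell t}/r_\ell^2=\delta_\ell .
\]
If $\sigma_{\ell t}=0$, the radius is $0$ and the deviation is zero almost surely, so the bound still holds. A union bound then gives, with probability at least $1-\sum_\ell\delta_\ell\ge1-\delta$, that all $m$ scores lie in their intervals simultaneously. No independence across landmarks is needed.

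\textbf{Step 3: order preservation.} On this event, every $\ell\in T^\star$ and $j\notin T^\star$ satisfy
\[
Z_\ell\ge\mu_\ell-r_\ell\ge\min_{T^\star}(\mu-r)>\max_{(T^\star)^c}(\mu+r)\ge Z_j,
\]
using Eq.~\ref{eq:population-separation}. Every member of $T^\star$ therefore strictly exceeds every non-member. With $|T^\star|=K$, this forces $\mathrm{Top}_K(Z_{\cdot t})=T^\star$, and since the inequalities are strict, tie-breaking plays no role.

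\textbf{Step 4: the consequence.} This part is a contrapositive monotonicity argument. Suppose Eq.~\ref{eq:population-separation} held for some budget $\mathcal B$. By Step 1, $\mu_\ell-r^{\rm fit}_\ell\ge\mu_\ell-r_\ell(\mathcal B)$ and $\mu_j+r^{\rm fit}_j\le\mu_j+r_j(\mathcal B)$. Hence
\[
\min_{T^\star}(\mu-r^{\rm fit})\ge\min_{T^\star}(\mu-r(\mathcal B))>\max_{(T^\star)^c}(\mu+r(\mathcal B))\ge\max_{(T^\star)^c}(\mu+r^{\rm fit}),
\]
which contradicts Eq.~\ref{eq:floor-overlap}.

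It is important that $\mu_{\ell t}$, $\kappa_{\ell t}$, $\tau_\ell$, $T^\star$ and the allocation $\delta_\ell$ do not depend on $\mathcal B$. The protocol guarantees this: the landmarks and fit distribution are fixed, $\kappa_{\ell t}=k^\star(x_\ell,x_t)$ is the unbiased mean for every $(h,s)$, and the allocation is fixed in advance.

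For the limit, as $V^{\mathcal B}_{\ell t}\to0$ the expression $\sigma^2_{\ell t}(\mathcal B)=V^{\mathcal B}_{\ell t}(\bar\omega_\ell^2+\tau_\ell^2)+\kappa_{\ell t}^2\tau_\ell^2$ tends to $\kappa^2_{\ell t}\tau^2_\ell$. Continuity of the square root then gives $r_\ell(\mathcal B)\to r^{\rm fit}_\ell$.

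\textbf{Expected obstacle.} The only delicate point is bookkeeping rather than analysis: every quantity other than $V^{\mathcal B}$ must be held budget-invariant, and independence of the kernel draw from the fit randomness must hold conditionally on $\mathcal G$. I would state both explicitly before Step 4.
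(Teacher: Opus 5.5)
Your proposal is correct and follows the paper's proof step for step. Independence gives the product-variance identity and hence $\sigma_{\ell t}^2(\mathcal B)\ge\kappa_{\ell t}^2\tau_\ell^2$; Chebyshev with a union bound gives simultaneous coverage; the separation condition then forces $\mathrm{Top}_K(Z^{\mathcal B}_{\cdot t})=T^\star$; the floor inequality shows that overlapping floor intervals rule out separation at every budget, with $r_\ell(\mathcal B)\to r^{\rm fit}_\ell$ as $V^{\mathcal B}_{\ell t}\to0$. Your explicit handling of the zero-radius case and of the budget-invariance of $\mu_{\ell t}$, $\kappa_{\ell t}$, $\tau_\ell$, $T^\star$ and $\delta_\ell$ are small clarifications that the paper's proof leaves implicit.
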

\end{greentheorem}

\Cref{thm:fit-floor} limits the certificate rather than the ranking, since overlapping second-moment intervals do not by themselves imply an unstable top-$K$ set.
If fitting and evaluation share randomness, then $\Cov(K,\Omega)\neq0$ in general and Eq. \ref{eq:product-rule} no longer holds as an identity.
Over $64$ independent realizations on $12$ landmarks, the plug-in Eq. \ref{eq:product-rule} has a median measured-to-predicted variance ratio of $0.978$ (\Cref{app:2c}).
The kernel factor supplies $83$--$89\%$ of the predicted variance there, so the kernel budget still governs most of the score spread.

\begin{figure}[t]
  \centering
  \includegraphics[width=0.99\linewidth]{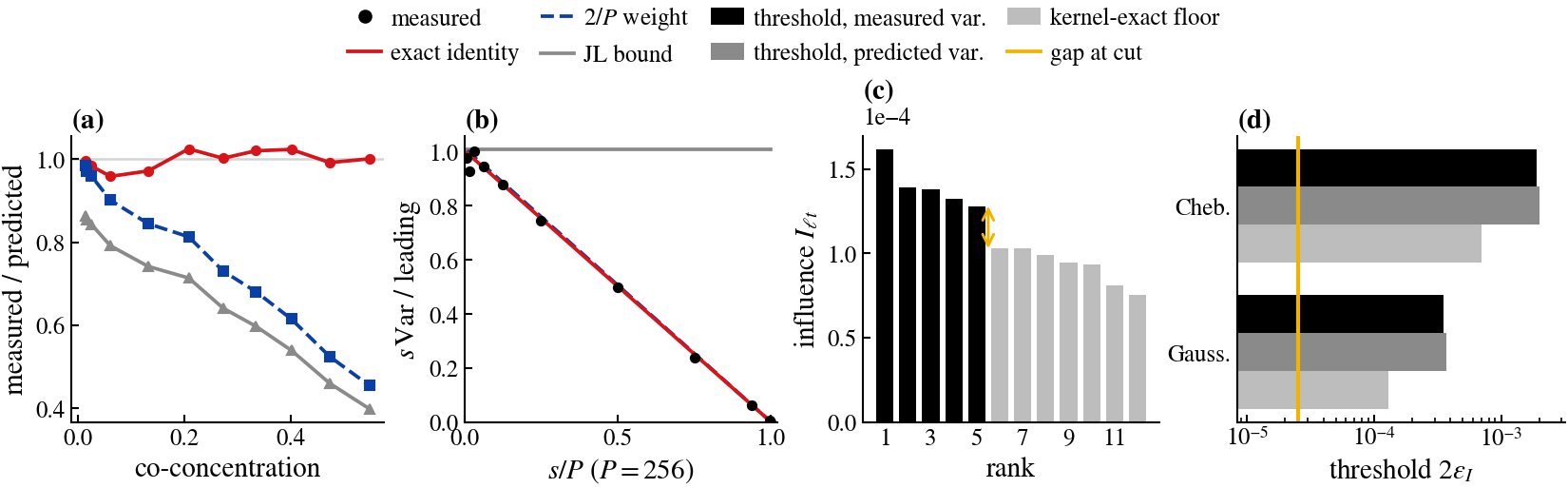}
\caption{\textbf{Error model and certificates.} \textbf{(a)} Measured over predicted SRHT variance ($P=256$, $s=32$). \textbf{(b)} Finite-population correction. \textbf{(c)} Score spectrum at one query. \textbf{(d)} Plug-in Chebyshev thresholds; the fit floor is $28\times$ the top-$5$ gap.}
  \label{fig:theory}
\end{figure}

\Cref{fig:theory}(c,d) illustrates both certificates on a synthetic backbone with $n=64$, $m=12$, $h=4$, $d=8$ and $1864$ unpadded parameters.
The plug-in Chebyshev threshold is $1.91\times10^{-3}$ and the plug-in floor is $7.01\times10^{-4}$, against a top-$5$ gap of $2.51\times10^{-5}$.
The floor is $28\times$ the gap, so in this example a larger kernel budget alone cannot close the separation deficit.
The example is diagnostic, since $\tau_\ell^2$ is a plug-in estimate rather than a finite-sample upper confidence bound.
\Cref{thm:topk} thus controls kernel error for one fixed fit, while \Cref{thm:fit-floor} controls repeated-fit stability of the population score.

\subsection{Landmark selection and budget}
\label{sec:selection}

Let $\Ktil=GG^\top$ with $G\in\R^{n\times q}$ be the Gram matrix of the selection features, and let $\mathcal Z$ be the landmark set.
For Eq. \ref{eq:composed}, one valid feature map is $\psi_{h,s}(x)=h^{-1/2}[(SJ_x^\top w_1)^\top,\dots,(SJ_x^\top w_h)^\top]^\top\in\R^{hs}$, so that $q=hs$.
A surrogate selector uses its own features instead, and in every case the Nystr\"om residual is $R=\Ktil-\Ktil_{n\mathcal Z}\Ktil_{\mathcal Z\mathcal Z}^{+}\Ktil_{\mathcal Z n}$.

\begin{proposition}[Greedy pivoted Cholesky]
\label{prop:greedy}
Let $\delta^{(t)}$ be the residual diagonal after $t$ pivots, and let the next pivot be $j_{t+1}=\arg\max_j\delta^{(t)}_j$.
(i) $R_t\succeq0$ for every realized feature map, and $\sum_j\delta^{(t)}_j=\tr R_t$ holds exactly.
(ii) The selected sets are nested, so a single run yields the whole budget curve.
(iii) At most $\mathrm{rank}(\Ktil)\le\min(n,q)$ pivots have a positive residual.
(iv) The pivot sequence is invariant under the rescaling $\Ktil\mapsto c\Ktil$ for any $c>0$.
\end{proposition}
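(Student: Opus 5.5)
We will prove all four claims from the Schur-complement description of the pivoted Cholesky residual. With $\Ktil=GG^\top$ and pivot set $\mathcal Z_t$ of size $t$, we write $R_t=\Ktil-\Ktil_{n\mathcal Z_t}\Ktil_{\mathcal Z_t\mathcal Z_t}^{+}\Ktil_{\mathcal Z_t n}$. Let $G_{\mathcal Z}$ be the rows of $G$ indexed by $\mathcal Z$, and let $\Pi_t=G_{\mathcal Z_t}^\top(G_{\mathcal Z_t}G_{\mathcal Z_t}^\top)^{+}G_{\mathcal Z_t}$ be the orthogonal projector onto the row span of $G_{\mathcal Z_t}$ in $\R^q$. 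Then
\begin{equation*}
R_t=G(I-\Pi_t)G^\top .
\end{equation*}
This identity holds for any realized feature map, including $\psi_{h,s}$, since it uses only $\Ktil=GG^\top$. It gives (i) immediately: $R_t$ is PSD because $I-\Pi_t$ is a projector. Also $\delta^{(t)}_j=(R_t)_{jj}=\norm{(I-\Pi_t)g_j}^2\ge0$, so $\sum_j\delta^{(t)}_j=\tr R_t$ by definition of the trace.

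Next we check that the greedy algorithm matches this description. One pivoted Cholesky step on pivot $j$ with $\delta^{(t)}_j>0$ replaces $R_t$ by its Schur complement $R_t-R_t e_je_j^\top R_t/\delta^{(t)}_j$. In feature space this is the Gram–Schmidt update $\Pi_{t+1}=\Pi_t+uu^\top$ with $u=(I-\Pi_t)g_j/\norm{(I-\Pi_t)g_j}$. A short computation shows that the update produces $G(I-\Pi_{t+1})G^\top$, which is the Nyström residual for $\mathcal Z_t\cup\{j\}$. Induction on $t$ therefore identifies the algorithm's residual with the formula above. We expect this step, carefully handling the pseudoinverse when $\Ktil_{\mathcal Z\mathcal Z}$ is singular, to be the main technical point. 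We will handle it by working with projectors rather than inverses.

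Claim (ii) holds because the algorithm only ever appends pivots. Hence $\mathcal Z_t\subset\mathcal Z_{t+1}$, and running to budget $m$ records $\tr R_t$ for every $t\le m$ along the way.

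For (iii), note that a pivot with $\delta^{(t)}_j>0$ adds a new orthonormal direction to the range of $\Pi$. So after $r$ positive pivots, $\mathrm{rank}\,\Pi_r=r$. Since $\Pi_r$ projects onto a subspace of the row space of $G$, we get $r\le\mathrm{rank}(G)=\mathrm{rank}(\Ktil)\le\min(n,q)$.

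For (iv), replace $G$ by $\sqrt c\,G$. The projectors $\Pi_t$ depend only on row spans, so they are unchanged, and every $\delta^{(t)}$ is multiplied by $c>0$. The argmax, together with any fixed tie-breaking rule, is therefore unchanged at every step. Induction on $t$ gives the same pivot sequence.
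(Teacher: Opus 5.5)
Your proposal is correct and follows the same route as the paper. Both arguments identify $R_t$ with $G(I-\Pi_t)G^\top$, the Gram matrix of the features projected off the span of the selected landmark features, and read (i)--(iv) from that projection: PSD-ness and the trace identity from the squared projection residuals on the diagonal, nestedness from sequential selection, the rank bound from one new independent direction per positive pivot, and scale invariance from uniform scaling of every residual. Your explicit Gram--Schmidt check that the Schur-complement update reproduces the Nystr\"om residual for $\mathcal Z_t\cup\{j\}$, and your fixed tie-breaking rule, fill in details the paper's sketch leaves implicit. You could also add, as the paper does in (ii), that a selected point has zero residual and is never reselected while any residual is positive; this is what justifies $|\mathcal Z_t|=t$.
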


Part (i) makes the running residual an exact certificate of the kernel mass that the realized features leave unexplained.
Part (ii) couples the feature dimension to the budget, since the $h$-head sketch has at most $\min(n,hs)$ informative pivots.
The residual measures total unexplained mass, whereas class coverage is a different functional of the same residual vector.

\begin{proposition}[Residual mass does not control coverage]
\label{prop:coverage}
Let class $c$ have index set $C_c$ and trace share $\tau_c=\sum_{j\in C_c}\Ktil_{jj}/\tr\Ktil$.
(i) If $\mathcal Z\cap C_c=\emptyset$ and $\mathcal Z$ spans the features of every point outside $C_c$, then $\tr R/\tr\Ktil\le\tau_c$ although class $c$ is absent from the pool.
(ii) If $|C_c|^{-1}\sum_{j\in C_c}\delta^{(t)}_j>\max_{j\notin C_c}\delta^{(t)}_j$, then the next greedy pivot lies in $C_c$.
\end{proposition}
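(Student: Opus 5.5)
For part (i) we will use the variational characterization of the Nystr\"om residual. Since $\Ktil=GG^\top$, the residual is $R=G(I-\Pi_{\mathcal Z})G^\top$, where $\Pi_{\mathcal Z}$ is the orthogonal projector onto the row span of $G_{\mathcal Z,:}$, that is, onto the span of the landmark features $g_z$ for $z\in\mathcal Z$. The diagonal entry is therefore $R_{jj}=\norm{(I-\Pi_{\mathcal Z})g_j}_2^2$, the squared distance from $g_j$ to the landmark span. This also recovers $R\succeq0$, which is part (i) of \Cref{prop:greedy}. By hypothesis every $g_j$ with $j\notin C_c$ lies in that span, so $R_{jj}=0$ off $C_c$. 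For $j\in C_c$, projection does not increase the norm, so $R_{jj}\le\norm{g_j}^2=\Ktil_{jj}$. Summing over $j$ gives
\[
\tr R=\sum_{j\in C_c}R_{jj}\le\sum_{j\in C_c}\Ktil_{jj}=\tau_c\tr\Ktil,
\]
and dividing by $\tr\Ktil$ gives the claim. If $\tr\Ktil=0$, then $\tau_c$ is undefined, so we assume $\tr\Ktil>0$. The statement that class $c$ is absent from the pool is just the hypothesis $\mathcal Z\cap C_c=\emptyset$. The point is that the bound can be small, since $\tau_c$ can be small, even though coverage of class $c$ is zero.

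For part (ii), the average over $C_c$ is at most the maximum over $C_c$, so
\[
\max_{j\in C_c}\delta^{(t)}_j\ge|C_c|^{-1}\sum_{j\in C_c}\delta^{(t)}_j>\max_{j\notin C_c}\delta^{(t)}_j.
\]
The greedy rule $j_{t+1}=\arg\max_j\delta^{(t)}_j$ therefore attains its maximum only at indices in $C_c$. The strict inequality means ties across the class boundary cannot occur, so any tie-breaking rule picks a pivot in $C_c$. We need $C_c\neq\emptyset$ for the average to be defined, and this is implicit in the hypothesis.

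The only step needing care is identifying $\delta^{(t)}$ with $\operatorname{diag}(R_t)$ and $R_{jj}$ with a squared projection distance when $\Ktil_{\mathcal Z\mathcal Z}$ is singular. We will handle this with the pseudoinverse identity
\[
G_{\mathcal Z}^\top(G_{\mathcal Z}G_{\mathcal Z}^\top)^{+}G_{\mathcal Z}=\Pi_{\mathcal Z},
\]
which is a standard fact. Neither part needs the greedy order in (i) or any structure of the feature map beyond the factorization $\Ktil=GG^\top$.
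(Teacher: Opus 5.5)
Your proposal is correct and follows the paper's own argument. For (i), the paper also notes that the residual diagonal vanishes off $C_c$ and is at most $\Ktil_{jj}$ on it; your identity $R=G(I-\Pi_{\mathcal Z})G^\top$ spells out the projected-feature Gram view the paper uses in the proof of \Cref{prop:greedy}(i). For (ii), the paper uses the same observation that the maximum over $C_c$ is at least the mean, and your remarks on the pseudoinverse, on $\tr\Ktil>0$ and on $C_c\neq\emptyset$ fill in details the paper leaves implicit.
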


The KREPES coverage number $\kappa_C$ ranks the pool by $\omega_\ell$ and counts the landmarks needed to see every class.
A class absent from the pool therefore stays uncovered at every rank, however small the residual may be.
Part (ii) is a sufficient condition for greedy selection to visit an under-explained class, not a uniform coverage guarantee.
For the budget we use the effective dimension $d_{\rm eff}(\lambda;A)=\tr(A(A+\lambda I)^{-1})$ of a positive semidefinite $A$ \citep{alaoui2015ridge,rudi2015less}.

\begin{bluetheorem}
\begin{lemma}[Where $d_{\rm eff}$ may be read]
\label{lem:deff}
(i) Every principal submatrix $\Ktil_{\mathcal I\mathcal I}$ and every Nystr\"om approximation $\widetilde Q\psdle\Ktil$ satisfies $d_{\rm eff}(\lambda;\cdot)\le d_{\rm eff}(\lambda;\Ktil)$.
(ii) For $\lambda>0$ and $\Ktil\neq0$, the effective dimension satisfies $d_{\rm eff}(\lambda;\Ktil)<\mathrm{rank}(\Ktil)\le\min(n,q)$.
(iii) The effective dimension is scale-invariant in the sense that $d_{\rm eff}(c\lambda;cA)=d_{\rm eff}(\lambda;A)$ for every $c>0$.
\end{lemma}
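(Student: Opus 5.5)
The plan is to reduce all three parts to the spectral form $d_{\rm eff}(\lambda;A)=\sum_i \lambda_i/(\lambda_i+\lambda)$, where $\lambda_i\ge0$ are the eigenvalues of the positive semidefinite $A$. The map $f(x)=x/(x+\lambda)$ is increasing on $[0,\infty)$, vanishes at $0$, and satisfies $f(x)<1$ for every $x\ge0$ when $\lambda>0$. Throughout I will take $\lambda>0$, which is implicit in the definition.

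\emph{Part (i).} I will treat the Nystr\"om case and the principal-submatrix case separately.

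For a Nystr\"om approximation $\widetilde Q\psdle\Ktil$, both matrices are $n\times n$ and PSD. By Weyl/Courant--Fischer monotonicity, the ordered eigenvalues satisfy $\lambda_i(\widetilde Q)\le\lambda_i(\Ktil)$ for every $i$. Applying the increasing $f$ termwise and summing gives $d_{\rm eff}(\lambda;\widetilde Q)\le d_{\rm eff}(\lambda;\Ktil)$. The one-sidedness $\widetilde Q\psdle\Ktil$ is classical, and it also follows from Proposition~\ref{prop:greedy}(i), since $R\succeq0$.

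For a principal submatrix $\Ktil_{\mathcal I\mathcal I}$ of size $k\le n$, I will use Cauchy interlacing: $\lambda_i(\Ktil_{\mathcal I\mathcal I})\le\lambda_i(\Ktil)$ for $i=1,\dots,k$. Summing $f$ over these $k$ terms gives a quantity at most $\sum_{i\le k}f(\lambda_i(\Ktil))$. The omitted terms $f(\lambda_i(\Ktil))$ for $i>k$ are nonnegative, so adding them only increases the right-hand side.

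\emph{Part (ii).} Let $r=\mathrm{rank}(\Ktil)\ge1$. Exactly $r$ eigenvalues are positive, and the zero eigenvalues contribute $f(0)=0$. Hence
\[
d_{\rm eff}=\sum_{i=1}^r \frac{\lambda_i}{\lambda_i+\lambda}<r ,
\]
where strictness holds because each term is strictly below $1$ when $\lambda>0$ and the sum is nonempty. The bound $r\le\min(n,q)$ follows from $\Ktil=GG^\top$ with $G\in\R^{n\times q}$.

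\emph{Part (iii).} I will compute directly:
\[
cA(cA+c\lambda I)^{-1}=cA\,c^{-1}(A+\lambda I)^{-1}=A(A+\lambda I)^{-1}.
\]
Taking the trace gives $d_{\rm eff}(c\lambda;cA)=d_{\rm eff}(\lambda;A)$.

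The only step needing care is the mismatched dimensions in part (i). Interlacing compares only the top $k$ eigenvalues, so the argument relies on $f\ge0$ to drop the remaining terms. Writing it as an index-by-index comparison of sorted eigenvalues handles this cleanly. I expect no real obstacle beyond that.
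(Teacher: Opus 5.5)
Your proposal is correct and follows the paper's proof essentially step for step. Both use the spectral form of $d_{\rm eff}$, then Cauchy interlacing plus nonnegativity of the dropped terms for principal submatrices, counting of the $\mathrm{rank}(\Ktil)$ nonzero eigenvalues (each contributing strictly less than one), and direct cancellation of $c$. The only variation is for $\widetilde Q\psdle\Ktil$, where you use Weyl's eigenvalue monotonicity $\lambda_i(\widetilde Q)\le\lambda_i(\Ktil)$ together with scalar monotonicity of $x/(x+\lambda)$, whereas the paper invokes operator monotonicity; your fact is weaker and is all the trace inequality needs.
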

\end{bluetheorem}

\begin{corollary}[A power-law spectrum has no intrinsic knee]
\label{cor:noknee}
Any rank-$m$ Nystr\"om approximation leaves a residual with $\tr R\ge\sum_{i>m}\lambda_i(\Ktil)$.
If $\lambda_i\propto i^{-\alpha}$ with $\alpha>1$ over the non-saturated range, then $\sum_{i>m}\lambda_i\asymp m^{-(\alpha-1)}$ and $d_{\rm eff}(\lambda)\asymp\lambda^{-1/\alpha}$ up to the finite-rank cutoff.
\end{corollary}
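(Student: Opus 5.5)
The corollary has two parts, and I will prove them separately.

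\textbf{Residual bound.} The first part is Eckart--Young combined with the one-sidedness of Nystr\"om approximation. For any landmark set $\mathcal Z$ with $|\mathcal Z|=m$, the approximation
\[
\widetilde Q=\Ktil_{n\mathcal Z}\Ktil_{\mathcal Z\mathcal Z}^{+}\Ktil_{\mathcal Z n}
\]
has rank at most $m$. It also satisfies $0\psdle\widetilde Q\psdle\Ktil$. To see this, write $\Ktil=GG^\top$; then $\widetilde Q=G\Pi G^\top$, where $\Pi$ is the orthogonal projector onto the row space of $G_{\mathcal Z}$. Hence
\[
R=G(I-\Pi)G^\top\succeq0,
\]
which is already part (i) of \Cref{prop:greedy}.

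Next I apply Weyl's monotonicity inequality. Since $\Ktil=\widetilde Q+R$ with $R\succeq0$, and $\lambda_{m+1}(\widetilde Q)=0$, we get $\lambda_{i}(R)\ge\lambda_{i+m}(\Ktil)$ for every $i$. Summing over $i$ gives
\[
\tr R\ge\sum_{i>m}\lambda_i(\Ktil).
\]
Equivalently, this is the Ky Fan maximum principle: $\tr\widetilde Q\le\sum_{i\le m}\lambda_i(\Ktil)$ for any PSD matrix of rank at most $m$ dominated by $\Ktil$. This half is routine.

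\textbf{Power-law asymptotics.} For the tail, set $\lambda_i=c\,i^{-\alpha}$ with $\alpha>1$ for $i\le N$, where $N$ is the finite-rank cutoff. I will use the integral comparison
\[
\int_{m+1}^{N+1}x^{-\alpha}\,dx\le\sum_{i=m+1}^{N}i^{-\alpha}\le\int_{m}^{N}x^{-\alpha}\,dx.
\]
This gives $\sum_{i>m}\lambda_i\asymp m^{1-\alpha}$ uniformly for $m\le N/2$, which is the sense of "up to the finite-rank cutoff".

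For the effective dimension, write
\[
d_{\rm eff}(\lambda)=\sum_i\frac{\lambda_i}{\lambda_i+\lambda}
\]
and split the sum at the index $i_\lambda=(c/\lambda)^{1/\alpha}$, where $\lambda_i=\lambda$.
\begin{itemize}
\item For $i\le i_\lambda$, each summand lies in $[1/2,1]$, so this block contributes $\asymp i_\lambda$.
\item For $i>i_\lambda$, each summand lies between $\lambda_i/(2\lambda)$ and $\lambda_i/\lambda$. The tail sum is $\asymp \lambda^{-1}i_\lambda^{1-\alpha}$, which is $\asymp i_\lambda$ by the previous step, because $\lambda\asymp i_\lambda^{-\alpha}$.
\end{itemize}
Hence $d_{\rm eff}(\lambda)\asymp\lambda^{-1/\alpha}$, valid for $\lambda$ with $1\ll i_\lambda\le N/2$. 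For larger $\lambda$ the statement is vacuous up to constants; for smaller $\lambda$ the bound saturates at $\mathrm{rank}$, consistent with \Cref{lem:deff}(ii).

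\textbf{Where care is needed and how to conclude.} The only delicate point is stating the uniformity range precisely: the constants in $\asymp$ must depend only on $\alpha$ and $c$, not on $N$. I will handle this by restricting to $m\le N/2$ and $i_\lambda\le N/2$. Both displayed quantities vary as pure powers with no change of regime in between. It follows that neither the residual curve nor $d_{\rm eff}$ has a distinguished knee, and that $m$ must be chosen via $\lambda$.
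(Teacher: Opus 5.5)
Your proof is correct and follows essentially the paper's route. For the residual, the paper writes $\tr R=\tr\Ktil-\tr\widetilde Q$ and bounds $\tr\widetilde Q\le\sum_{i\le m}\lambda_i(\Ktil)$ for a rank-$\le m$ matrix $\widetilde Q\psdle\Ktil$, which is your Weyl/Ky Fan step; your split of $d_{\rm eff}$ at $i_\lambda$ unpacks the paper's comparison with $\int_1^\infty(1+\lambda t^\alpha)^{-1}\,\mathrm dt$ (substitute $u=\lambda^{1/\alpha}t$), and your explicit range $m\le N/2$, $1\ll i_\lambda\le N/2$ makes precise what the paper leaves as ``up to the finite-rank cutoff''.
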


The effective dimension is therefore a scale-aware summary of the realized spectrum, not a certified landmark count for a target ranking error.
Reading it from a principal submatrix or a Nystr\"om approximation can only lower it, so such estimates understate the budget the full kernel needs.
Under a power-law spectrum, each multiplicative increase in $m$ buys a comparable multiplicative residual reduction, so the data define no unique knee.
We therefore use $d_{\rm eff}(\lambda)$ as a budget heuristic indexed by a user-chosen ridge $\lambda$, not as a required landmark count.

\subsection{Initialization}
\label{sec:init}

KREPES sets $\tilde A_0=U_k(\Lambda_k+\epsilon I)^{-1/2}$ from the leading eigenpairs of the centered $K_{mm}$, which we call principal-component initialization (PCI).
A Lanczos eigensolver \citep{lanczos1950iteration,golub2013matrix} approximates the leading eigenspace from matrix-vector products with $O(mk)$ additional storage.

\begin{proposition}[Inverse-square-root initialization amplifies spectral error]
\label{prop:amplify}
Let $(\hat\lambda_i,\hat u_i)$ be a Ritz pair of a symmetric $A$, with $\norm{\hat u_i}_2=1$, $\hat\lambda_i=\hat u_i^\top A\hat u_i$ and residual $\rho_i=\norm{A\hat u_i-\hat\lambda_i\hat u_i}_2$.
Suppose $\lambda_i$ is the simple eigenvalue closest to $\hat\lambda_i$, with gap $\gamma_i=\min_{j\neq i}|\lambda_i-\lambda_j|>2\rho_i$ and $\lambda_i+\epsilon>0$.
Then, up to sign and to first order in $\rho_i$, the error of $\hat u_i(\hat\lambda_i+\epsilon)^{-1/2}$ against $u_i(\lambda_i+\epsilon)^{-1/2}$ is
\[
O\bigl(\rho_i/(\gamma_i\sqrt{\lambda_i+\epsilon})\bigr)+O\bigl(\rho_i/(\lambda_i+\epsilon)^{3/2}\bigr).
\]
\end{proposition}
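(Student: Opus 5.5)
We split the error with the triangle inequality. Fix the sign of $\hat u_i$ so that $u_i^\top\hat u_i\ge0$, and write $f(\lambda)=(\lambda+\epsilon)^{-1/2}$. Then
\[
\hat u_i f(\hat\lambda_i)-u_i f(\lambda_i)=(\hat u_i-u_i)f(\lambda_i)+\hat u_i\bigl(f(\hat\lambda_i)-f(\lambda_i)\bigr),
\]
and since $\norm{\hat u_i}_2=1$, the error norm is at most $\norm{\hat u_i-u_i}_2\,f(\lambda_i)+|f(\hat\lambda_i)-f(\lambda_i)|$. The first summand will produce the eigenvector term and the second the eigenvalue term.

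\textbf{Eigenvector term.} We invoke the Davis--Kahan $\sin\theta$ theorem in residual form (see \citealp{golub2013matrix}). Expand $\hat u_i=\sum_j c_ju_j$ in the orthonormal eigenbasis of $A$, so that $A\hat u_i-\hat\lambda_i\hat u_i=\sum_j c_j(\lambda_j-\hat\lambda_i)u_j$. Keeping only $j\neq i$ gives
\[
\rho_i^2\ge\min_{j\ne i}|\lambda_j-\hat\lambda_i|^2\sum_{j\ne i}c_j^2 .
\]
The hypothesis that $\lambda_i$ is the simple eigenvalue closest to $\hat\lambda_i$ gives $|\lambda_j-\hat\lambda_i|\ge\gamma_i/2$ for $j\neq i$, so $\sin\theta\le 2\rho_i/\gamma_i<1$. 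This is where the assumption $\gamma_i>2\rho_i$ enters: it keeps the bound nontrivial. The sharper first-order estimate $\min_{j\ne i}|\lambda_j-\hat\lambda_i|\ge\gamma_i-O(\rho_i)$ also holds, and either constant suffices. With the chosen sign, $\norm{\hat u_i-u_i}_2\le\sqrt2\sin\theta=O(\rho_i/\gamma_i)$. Multiplying by $f(\lambda_i)$ yields $O\bigl(\rho_i/(\gamma_i\sqrt{\lambda_i+\epsilon})\bigr)$.

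\textbf{Eigenvalue term.} Since $\hat\lambda_i$ is a Rayleigh quotient, $\hat\lambda_i-\lambda_i=\sum_j c_j^2(\lambda_j-\lambda_i)$. This gives $|\hat\lambda_i-\lambda_i|\le\norm{A-\lambda_iI}_2\sin^2\theta$, which is second order in $\rho_i$, and also the cruder Bauer--Fike bound $|\hat\lambda_i-\lambda_i|\le\rho_i$. We use the cruder bound, since the statement only asks for a first-order estimate. The mean value theorem then gives $|f(\hat\lambda_i)-f(\lambda_i)|=\tfrac12(\xi+\epsilon)^{-3/2}|\hat\lambda_i-\lambda_i|$ for some $\xi$ between $\lambda_i$ and $\hat\lambda_i$. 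Because $\lambda_i+\epsilon>0$, for $\rho_i$ small enough we have $\xi+\epsilon\ge\tfrac12(\lambda_i+\epsilon)$, and the term is $O\bigl(\rho_i/(\lambda_i+\epsilon)^{3/2}\bigr)$.

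\textbf{Main obstacle.} The delicate step is controlling $\hat\lambda_i+\epsilon$ away from zero, so that $f(\hat\lambda_i)$ is defined and the mean-value constant stays bounded. This is exactly why the statement is first order in $\rho_i$: we restrict to $\rho_i\le\tfrac12(\lambda_i+\epsilon)$, where the constants are explicit. Adding the two terms gives the claimed bound, up to sign.
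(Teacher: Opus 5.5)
Your proof is correct and follows the same route as the paper's: an a posteriori eigenvector bound giving angular error $O(\rho_i/\gamma_i)$, an eigenvalue error $O(\rho_i)$, and a first-order perturbation of $u(\lambda+\epsilon)^{-1/2}$ that splits into exactly your two terms. Your version is more explicit than the paper's, since you derive the residual-form $\sin\theta$ bound from the eigen-expansion and replace its linearization with an exact triangle-inequality split plus the mean value theorem, so you obtain a non-asymptotic bound with explicit constants whenever $\rho_i\le\tfrac12(\lambda_i+\epsilon)$.
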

The inverse-square-root scaling therefore amplifies errors in small retained eigenvalues and in poorly separated directions.
For clustered eigenvalues only the invariant subspace is stable, so the bound should be applied to subspaces rather than individual eigenvectors.
This motivates checking the retained Ritz residuals, and it implies that no fixed Lanczos iteration count is universally sufficient.
We run $2k+20$ iterations, capped at $m$, with full reorthogonalization and verify agreement with the dense initializer separately.

PCI takes its directions from within-view kernel variance, whereas SSL objectives reward agreement between augmented views.
We therefore also initialize from cross-view moments of the landmark kernel, using the same relative jitter $\epsilon=10^{-6}\max_i|\lambda_i|$.
Let $K_A,K_B\in\R^{n\times m}$ be the centered kernels of the two views against the landmarks, and define the moments
$C_{\rm al}=\frac1nK_A^\top K_B$, $C_{\rm all}=\frac{1}{2n}(K_A^\top K_A+K_B^\top K_B)$ and $B=\frac12(C_{\rm al}+C_{\rm al}^\top)$.
We use
\begin{equation}
\textsc{CSI}:\ \tilde A_0=U_k\bigl(|\Lambda_k|+\epsilon I\bigr)^{-1/2},\qquad
\textsc{AUI}:\ \tilde A_0=WU_k,\quad W=\bigl(C_{\rm all}+\epsilon I\bigr)^{-1/2},
\label{eq:csi-aui}
\end{equation}
where \textsc{CSI} takes the leading $k$ eigenpairs of $C_{\rm al}+C_{\rm al}^\top$ and \textsc{AUI} takes the leading $k$ eigenvectors of $WBW$.
The absolute value in \textsc{CSI} is needed because $C_{\rm al}+C_{\rm al}^\top$ is symmetric but can be indefinite.
The columns of the \textsc{AUI} initializer solve the generalized eigenproblem $Bv=\mu\,(C_{\rm all}+\epsilon I)\,v$ with respect to the pooled within-view covariance.
Its eigenvalues are therefore not canonical correlations: canonical correlation analysis \citep{hotelling1936relations} whitens each view by its own covariance. Both initializers are rescaled to $\norm{\tilde A_0}_F=\sqrt{k}$, which fixes the overall scale but does not isolate direction from other differences with a Kaiming~\citep{he2015delvingdeeprectifierssurpassing} draw. We ask whether first-order optimization of the SSL loss can replace KREPES's single damped Gauss--Newton step. We test whether it reaches equal accuracy and landmark identifiability in less time, and at what cost.

\section{Experiments}
\label{sec:experiments}

\textbf{Setup.}
\method{} builds on the KREPES pipeline \citep{krepes2026} and changes only the components under study (\Cref{app:alg,app:details}).
Stage~1 uses a ResMLP eNTK on Adult and a ConvNet eNTK on MNIST, with $m=\CHECK{1000}$ landmarks per view on Adult and $m=\CHECK{256}$ on MNIST, i.e.\ two-view pools of $M=2m=2000$ and $M=512$.
Stage~2 fits $z=K\tilde A+b$ with $k=256$ under BT \citep{zbontar2021barlow}, SimCLR \citep{chen2020simclr} or VICReg \citep{bardes2022vicreg}, selecting checkpoints by validation accuracy, which uses labels at model-selection time (\Cref{app:details}).

We compare random, $k$-means++ (KREPES), DPP~\citep{dpp_Kulesza_2012} and $k$-center selectors against pivoted Cholesky on a surrogate kernel (PC) or the sketched eNTK (PC-NTK).

\textbf{Comparison with KREPES.}
Both methods fit the same encoder from the same PCI initialization, using one damped Gauss--Newton/CG step for KREPES and minibatch Adam for \method{}.
We reimplement the KREPES solver for all three objectives, and every operator matches an explicitly assembled $J^\top HJ$ to $10^{-10}$.
Convergence time includes the KREPES setup. \method{} keeps its best validation checkpoint, whereas KREPES reports its final CG iterate; \Cref{fig:acc-vs-time} shows every KREPES iterate. Identifiability is the percentile of the top-$10$ deletion effect of the $\omega_\ell$ ranking (Eq.~\ref{eq:influence}) among $30$ random deletions (\Cref{app:identifiability}).

\begin{table}[t]
\centering
\small
\setlength{\tabcolsep}{4pt}
\resizebox{0.9\linewidth}{!}{%
\begin{tabular}{llccccccc}
\toprule
 & & \multicolumn{3}{c}{Accuracy (\%)} & \multicolumn{3}{c}{Time to converge (s)} & Identifiability \\
\cmidrule(lr){3-5}\cmidrule(lr){6-8}\cmidrule(lr){9-9}
Dataset & Objective & CAIRN & KREPES & $\Delta$ & CAIRN & KREPES & Speedup & CAIRN / KREPES \\
\midrule
Adult & Barlow Twins & \textbf{84.10 $\pm$ 0.01} & 84.01 & \textbf{+0.09} & \textbf{0.994} & 10.5 & \textbf{10.5$\times$} & \textbf{83} / 73 \\
Adult & SimCLR & 84.08 & 84.35 & -0.27 & \textbf{2.04} & 7.76 & \textbf{3.8$\times$} & \textbf{97} / 93 \\
Adult & VICReg & 84.02 & 84.33 & -0.31 & \textbf{1.57} & 17.9 & \textbf{11.3$\times$} & \textbf{83} / 57 \\
\midrule
MNIST & Barlow Twins & 96.30 & 96.31 & -0.01 & \textbf{1.39} & 7.90 & \textbf{5.7$\times$} & 100 / 100 \\
MNIST & SimCLR & \textbf{96.10 $\pm$ 0.03} & 94.49 & \textbf{+1.61} & \textbf{6.10} & 20.4 & \textbf{3.3$\times$} & 100 / 100 \\
MNIST & VICReg & \textbf{96.13 $\pm$ 0.02} & 92.99 & \textbf{+3.14} & \textbf{4.63} & 11.7 & \textbf{2.5$\times$} & \textbf{100} / 73 \\
\bottomrule
\end{tabular}}
\caption{\method{} vs.\ KREPES from the same initialization. Accuracy is $k$-NN (\method{}: mean $\pm$ s.d., three seeds). Identifiability is the best of three ranking percentiles against $30$ random deletions.}
\label{tab:cairn-vs-krepes}
\end{table}

\method{} trails KREPES by at most $0.31$ points and exceeds it by $1.61$ and $3.14$ on MNIST SimCLR and VICReg, where KREPES falls below its initialization.
It converges $2.5$--$11.3\times$ faster in every setting, as KREPES pays a setup cost and its CG solve never reaches a tight tolerance (\Cref{fig:acc-vs-time,fig:krepes-residual}).
Its identifiability is never below KREPES, and its top-$10$ sets are reproducible across seeds, with Jaccard overlap $0.97$--$1.00$.
Since loss, initialization, and kernels are shared, these gains are attributable to the solver alone.


\begin{figure}[t]
  \centering
  \includegraphics[width=0.99\linewidth]{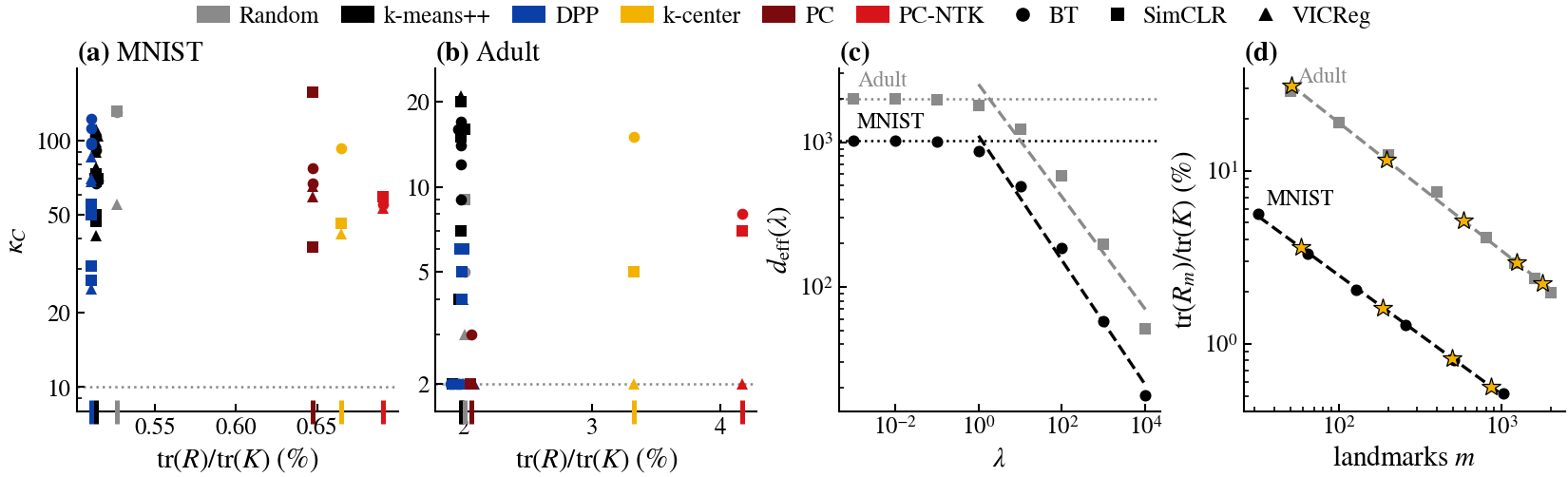}
\caption{\textbf{Coverage and budget.} \textbf{(a,b)} $\kappa_C$ against Nystr\"om residual; the dotted line is the optimum $\kappa_C=C$. \textbf{(c)} $d_{\rm eff}(\lambda)$ with power-law fits. \textbf{(d)} Residual on nested $k$-means++ prefixes; stars mark $m=d_{\rm eff}(\lambda)$.}
  \label{fig:selection}
\end{figure}

\textbf{Class coverage and landmark budget.}
Four selectors tie on Nystr\"om residual ($0.51$--$0.65\%$ on MNIST, $1.98$--$2.07\%$ on Adult), yet residual barely predicts coverage (Spearman $|\rho|\le0.26$, $p\ge0.30$).
As \Cref{prop:coverage} predicts, $k$-means++ on Adult spans $\kappa_C=2$ to $21$ at an unchanged residual (\Cref{fig:selection}a,b).
PC has the best mean rank ($2.42$) and lowest worst-case ratio ($1.55$ against $8.5$ for $k$-means++), cutting Adult $\kappa_C$ from $15$ to $2$ under SimCLR (\Cref{tab:kappa-matched}).
PC adds the point with the largest residual diagonal, so it spreads the pool across regions the kernel has not yet explained.
On Adult the PC surrogate has rank $96$, so most of its landmarks come from the top-up rule of \Cref{app:details}.
The residual decays as a power law, $m^{-0.69}$ on MNIST and $m^{-0.74}$ on Adult ($R^2\ge0.996$), so it singles out no budget (\Cref{cor:noknee}).For $\lambda\ge1$, $d_{\rm eff}$ recovers our hand-chosen budgets, with $\lambda=10$ giving $m=496$ on MNIST and $m=1235$ on Adult (\Cref{fig:selection}c,d).
The fitted residuals at these budgets lie within $0.03$ points of measurement, and SLQ~\cite{slq-saad} estimates $d_{\rm eff}$ within $0.72\%$.

\begin{table}[t]
\centering
\begin{minipage}[t]{0.485\linewidth}
  \centering
\caption{\textbf{Accuracy (\%) by initialization.} Only $\tilde A_0$ differs; PCI-D/L dense/Lanczos eigensolvers.}
\label{tab:d3-merged}
  \resizebox{\linewidth}{!}{%
  \setlength{\tabcolsep}{5pt}
  \begin{tabular}{clccccc}
    \toprule
     & Loss & Kaiming & PCI-D & PCI-L & \textsc{CSI} & \textsc{AUI} \\
    \midrule
    \multirow{3}{*}{\rotatebox[origin=c]{90}{MNIST}}
          & BT     & 90.64 & 87.15 & 87.03 & 90.29 & \textbf{90.98} \\
          & SimCLR & 93.30 & 96.15 & 96.24 & \textbf{97.38} & 97.33 \\
          & VICReg & \textbf{93.23} & 90.30 & 90.21 & 93.04 & 93.06 \\
    \midrule
    \multirow{3}{*}{\rotatebox[origin=c]{90}{Adult}}
          & BT     & 81.59 & 81.01 & 80.95 & \textbf{81.77} & 81.66 \\
          & SimCLR & \textbf{81.98} & 81.29 & 81.34 & 81.91 & 81.86 \\
          & VICReg & \textbf{81.43} & 80.72 & 80.85 & 80.87 & 80.38 \\
    \midrule
    \multicolumn{2}{l}{Mean rank} & 2.17 & 4.17 & 4.17 & \textbf{2.00} & 2.50 \\
    \bottomrule
  \end{tabular}}
\end{minipage}\hfill
\begin{minipage}[t]{0.485\linewidth}
  \centering
\caption{\textbf{Coverage $\kappa_C$ by selector} (lower is better; optimum $C$). Ratios divide by row minimum.}
\label{tab:kappa-matched}
  \resizebox{\linewidth}{!}{%
  \setlength{\tabcolsep}{4pt}
  \begin{tabular}{clcccccc}
    \toprule
     & Loss & Random & $k$-means++ & DPP & $k$-center & PC-NTK & PC \\
    \midrule
    \multirow{3}{*}{\rotatebox[origin=c]{90}{MNIST}}
          & BT     & 130 & 67 & 96 & 93 & \textbf{55} & 67 \\
          & SimCLR & 131 & 50 & \textbf{31} & 46 & 59 & 37 \\
          & VICReg & 55  & 78 & 68 & \textbf{42} & 53 & 65 \\
    \midrule
    \multirow{3}{*}{\rotatebox[origin=c]{90}{Adult}}
          & BT     & 5   & 17 & \textbf{2} & 15 & 8 & 3 \\
          & SimCLR & 9   & 15 & 5 & 5 & 7 & \textbf{2} \\
          & VICReg & 3   & \textbf{2} & \textbf{2} & \textbf{2} & \textbf{2} & \textbf{2} \\
    \midrule
    \multicolumn{2}{l}{Mean rank}   & 4.83 & 4.58 & 2.92 & 3.08 & 3.17 & \textbf{2.42} \\
    \multicolumn{2}{l}{Mean ratio}  & 2.73 & 3.61 & 1.39 & 2.53 & 2.11 & \textbf{1.24} \\
    \multicolumn{2}{l}{Worst ratio} & 4.50 & 8.50 & 2.50 & 7.50 & 4.00 & \textbf{1.55} \\
    \bottomrule
  \end{tabular}}
\end{minipage}
\end{table}

\textbf{Initialization of the coefficients.}
\textsc{CSI} and \textsc{AUI} beat both PCI variants in six and five of six rows, and \textsc{CSI} has the best mean rank ($2.00$, \Cref{tab:d3-merged}). 
\begin{wrapfigure}{r}{0.6\textwidth}
  \centering
  \includegraphics[width=0.99\linewidth]{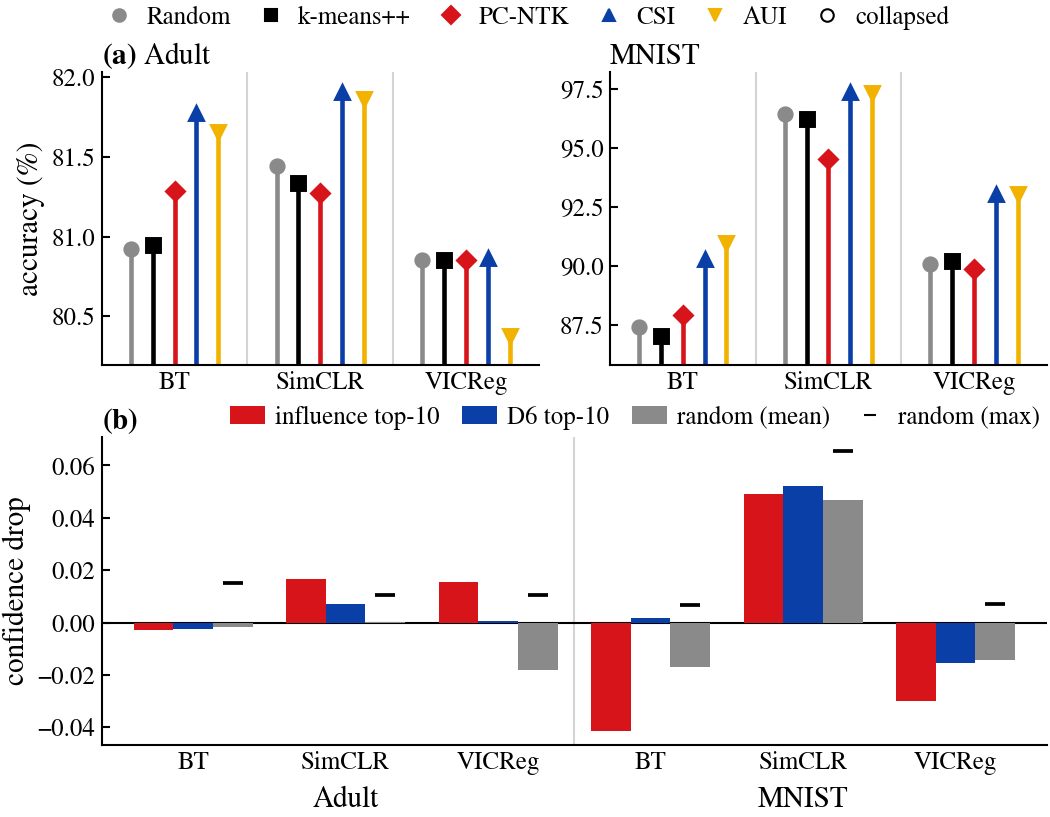}
  \caption{\textbf{Accuracy and deletion.} \textbf{(a)} Test accuracy across selectors (PCI-Lanczos fixed) and initializers ($k$-means++ fixed). \textbf{(b)} $k$-NN confidence drop after deleting the top-$10$ landmarks by $\omega_\ell$, Latent-Space Reselection, or $20$ random sets (bar: mean; tick: maximum).}
  \label{fig:acc-del}
\end{wrapfigure}
Their largest gains come under SimCLR on MNIST, where they beat Kaiming by $4.1$ and $4.0$ points.

PCI trails Kaiming in five of six rows, while Lanczos matches dense PCI to $4.7\times10^{-13}$ and within $0.13$ accuracy points. On MNIST the initializer moves accuracy up to $8.6\times$ more than the selector, which instead governs coverage.

\textbf{Deletion of influential landmarks.}
The top-$10$ landmarks by $\omega_\ell$ pass when deleting them lowers $50$-NN confidence more than all $20$ random deletions.
They pass on Adult SimCLR and VICReg, removing $1.45$--$1.58\times$ the damage of the most harmful random deletion.
Under a random ranking, two or more passes in six cells has probability $0.030$, which is significant at the $5\%$ level.
Reselection in coefficient space (Latent-Space Reselection) passes in no cell, and \Cref{app:alg} discusses optimizer sensitivity as one limiting factor.

\section{Conclusion}
\label{sec:conclusion}

\method{} provides a matrix-free framework that propagates randomized eNTK approximation error to representer-landmark rankings and separates kernel-induced instability from coefficient-fit variability. The exact SRHT variance and two-axis decomposition identify how each approximation budget contributes to kernel error, while the resulting high-probability certificate quantifies top-$K$ ranking stability for a fixed fit. Across repeated coefficient fits, the product-variance decomposition further exposes a nonzero variance floor that cannot be removed by increasing the kernel budget; in our worked example, this floor is $28\times$ the top-$5$ gap, although the estimate is plug-in and finite-sample. Empirically, \method{} matches KREPES within $0.31$ percentage points under matched objectives and initialization, while converging $2.5$--$11.3\times$ faster. Pivoted-Cholesky selection achieves accuracy comparable to $k$-means++ while avoiding its worst class-coverage failures, whereas initializer choice produces larger accuracy changes than selector choice. The confidence-drop study further shows that landmark identifiability is not uniformly detectable.

\textbf{Future Work.} As latents in \method{} are explicit sums over landmarks, certified landmark rankings enable several key downstream applications, e.g., (i) \textbf{Certified Data Governance and Unlearning:} Extending the certificates to eNTK surrogates of large encoders enables data attribution with formal error bars and supports exact top-$K$ deletion for compliance and copyright removal, avoiding approximate retraining; (ii) \textbf{Streaming and Federated Coresets:} Extending residual-greedy selection and effective-dimension budgeting to streaming and federated settings could enable dynamic landmark selection without recomputing the full kernel matrix; (iii) \textbf{Finite-Sample Bounds for Stochastic Instability:} Developing finite-sample bounds for the coefficient-fit variance floor would provide a general framework for quantifying irreducible instability in stochastic attribution methods.

\subsection*{AI use statement}

We used generative AI tools only for proofreading and improving grammar and readability of the manuscript, which falls under the recommended disclosure category.

\subsection*{Ethics statement}

We declare no conflict of interest.



\bibliography{reference}
\bibliographystyle{iclr2027_conference}

\appendix

\definecolor{lightblue}{RGB}{220, 230, 255}
\definecolor{lightgreen}{RGB}{220, 255, 220}
\definecolor{lightyellow}{RGB}{255, 255, 220}
\definecolor{lightorange}{RGB}{255, 235, 200}
\definecolor{lightgray}{RGB}{230, 230, 230}
\definecolor{lightpurple}{RGB}{235, 225, 255}
\definecolor{lightred}{RGB}{255, 225, 225}

\section{Appendix index and notation}
\label{app:notation}

\paragraph{Appendix index.}
\begin{itemize}
  \item \Cref{app:notation}: this index and the notation tables.
  \item \Cref{app:intro-architecture}: overview of the four-stage \method{} architecture (\Cref{fig:cairn-architecture}).
  \item \Cref{app:proofs}: proofs of \Cref{thm:srht-exact,prop:two-axis,thm:topk,thm:fit-floor}, of the conservative variance bound Eq. \ref{eq:conservative-variance}, and of \Cref{prop:greedy,prop:coverage,lem:deff,cor:noknee,prop:amplify}.
  \item \Cref{app:alg}: KREPES and \method{} algorithms, and how the score $I_{\ell t}$ should be interpreted under first-order optimization.
  \item \Cref{app:details}: data, selectors, initializers, training and evaluation settings (\Cref{tab:hparams}).
  \item \Cref{app:identify}: identifiability metrics and per-ranking results (\Cref{tab:identifiability}).
  \item \Cref{app:additional}: optimization dynamics of \method{} versus KREPES, residual by selector (\Cref{tab:d1-all}) and effective dimension by quadrature.
  \item \Cref{app:table1}: reproduction of KREPES Table~1 (\Cref{tab:table1-all}).
  \item \Cref{app:verify}: empirical verification of the two-axis decomposition (\Cref{app:2b}) and the product rule (\Cref{app:2c}).
\end{itemize}

\paragraph{Notation.}
The following tables summarize the mathematical notation used throughout the
paper, grouped by the section that introduces it (see
\Cref{tab:notation-kernel,tab:notation-certificates,tab:notation-selection,tab:notation-init,tab:notation-experiments}).

\begin{table*}[htbp]
\centering
\small
\begin{tabular}{@{}>{\raggedright\arraybackslash}p{0.24\textwidth}>{\raggedright\arraybackslash}p{0.72\textwidth}@{}}
\toprule
\textbf{Symbol} & \textbf{Definition} \\ \midrule
\rowcolor{lightgray} \multicolumn{2}{l}{\textbf{Network, empirical NTK and the sketched kernel}} \\
\rowcolor{lightblue} $f_\theta$ & Frozen backbone network with parameters $\theta$; the eNTK is taken at this network and it is never retrained \\
\rowcolor{lightblue} $x,\,x'$ & Generic inputs \\
\rowcolor{lightblue} $d$ & Number of network outputs \\
\rowcolor{lightblue} $P$ & Number of network parameters, zero-padded to a power of two for the SRHT (the unpadded count is written out, e.g.\ $1864$) \\
\rowcolor{lightblue} $J_x\in\R^{d\times P}$ & Jacobian of the network output with respect to the parameters at $x$, zero-padded \\
\rowcolor{lightblue} $M=J_xJ_{x'}^\top$,\ $\bar M$ & Cross-Jacobian ($d\times d$) of a pair of inputs; $\bar M=\tfrac12(M+M^\top)$ is its symmetric part \\
\rowcolor{lightblue} $k^\star(x,x')$ & Exact output-averaged eNTK, $\tr(M)/d$; the mean of the composed kernel \\
\rowcolor{lightblue} $k(x,x')$ & Kernel entry used in the representer expansion and the influence score Eq. \ref{eq:influence} \\
\rowcolor{lightblue} $h$ & Number of Gaussian output heads \\
\rowcolor{lightblue} $w_j$ & $j$-th output head, $w_j\stackrel{\rm iid}{\sim}\mathcal N(0,d^{-1}I_d)$, $j=1,\dots,h$ \\
\rowcolor{lightblue} $s$ & SRHT sketch width (number of retained rows), $1\le s\le P$ \\
\rowcolor{lightblue} $D$ & Diagonal Rademacher matrix; its diagonal entries are $\varepsilon_k\in\{\pm1\}$ in the proof of \Cref{thm:srht-exact} \\
\rowcolor{lightblue} $H$ & Normalized Walsh--Hadamard matrix ($H_{jk}^2=1/P$) \\
\rowcolor{lightblue} $R$ & Row-selection matrix drawing $s$ of $P$ rows without replacement (in the SRHT only) \\
\rowcolor{lightblue} $S=\sqrt{P/s}\,RHD$ & Subsampled randomized Hadamard transform (SRHT) sketch Eq. \ref{eq:composed} \\
\rowcolor{lightblue} $K_{h,s}(x,x')$ & Composed kernel estimate: head-averaged inner products of sketched head gradients Eq. \ref{eq:composed} \\
\rowcolor{lightblue} $C,\ \bar C$ & Generic $P\times P$ matrix whose trace $\tr(S^\top SC)$ is estimated (\Cref{thm:srht-exact}); $\bar C=\tfrac12(C+C^\top)$ \\
\rowcolor{lightblue} $\Phi(C)$ & SRHT variance functional, $2(\norm{\bar C}_F^2-\sum_jC_{jj}^2)$ Eq. \ref{eq:srht-exact} \\
\rowcolor{lightblue} $\tfrac{P-s}{(P-1)s}$ & Finite-population factor of sampling rows without replacement; vanishes at $s=P$ \\
\rowcolor{lightblue} $u,\,v$ & Vectors in the rank-one case $C=vu^\top$, i.e.\ the inner-product estimate $\langle Su,Sv\rangle$ \\
\rowcolor{lightblue} $a_j,\ T$ & In the proof of \Cref{thm:srht-exact}: $a_j=(HDCDH)_{jj}$ and $T$, the simple random sample of $s$ row indices \\
\rowcolor{lightblue} $C_1,\ \bar C_1$ & Single-head matrix $J_{x'}^\top w_1w_1^\top J_x$ and its mean $J_{x'}^\top J_x/d$ (\Cref{prop:two-axis}) \\
\rowcolor{lightblue} $\Var(K_{h,s})$ & Output axis $2\norm{\bar M}_F^2/(hd^2)$ plus parameter axis Eq. \ref{eq:two-axis} \\
\rowcolor{lightblue} $r_{\rm tr}(B)$ & Trace concentration ratio $\tr(B)^2/\norm B_F^2$; the output axis contributes $2/(h\,r_{\rm tr}(\bar M))$ to relative variance \\
\rowcolor{lightblue} $\norm B_F^2/\norm B_2^2$ & Stable rank; contrasted with $r_{\rm tr}$ in \Cref{prop:two-axis} \\
\bottomrule
\end{tabular}
\caption{Notation: network, empirical NTK and the sketched kernel.}
\label{tab:notation-kernel}
\end{table*}

\begin{table*}[htbp]
\centering
\small
\begin{tabular}{@{}>{\raggedright\arraybackslash}p{0.24\textwidth}>{\raggedright\arraybackslash}p{0.72\textwidth}@{}}
\toprule
\textbf{Symbol} & \textbf{Definition} \\ \midrule
\rowcolor{lightgray} \multicolumn{2}{l}{\textbf{Influence scores and ranking certificates}} \\
\rowcolor{lightgreen} $\mathcal Z$,\ $m$ & Landmark set and its size \\
\rowcolor{lightgreen} $x_\ell$ & $\ell$-th landmark, $\ell=1,\dots,m$ \\
\rowcolor{lightgreen} $x_t$ & Query (test) point whose landmark ranking is explained \\
\rowcolor{lightgreen} $\tilde A,\ \tilde A_0$ & Fitted Nystr\"om coefficient matrix and its initialization; row $\ell$ is $\tilde A_{\ell,:}$ \\
\rowcolor{lightgreen} $\Delta\tilde A$ & Coefficient displacement $\tilde A-\tilde A_0$ \\
\rowcolor{lightgreen} $\omega_\ell$ & Coefficient-displacement weight $\norm{\tilde A_{\ell,:}-\tilde A_{0,\ell,:}}_2$ of landmark $\ell$ Eq. \ref{eq:influence} \\
\rowcolor{lightgreen} $I_{\ell t}$ & Representer influence score $k(x_\ell,x_t)\,\omega_\ell$ Eq. \ref{eq:influence} \\
\rowcolor{lightgreen} $K$,\ $\mathrm{Top}_K(\cdot)$ & Size of the top-$K$ cut; the index set of the $K$ largest scores \\
\rowcolor{lightgreen} $\mathcal F$ & Conditioning information for the fixed-fit certificate: data, backbone, $\mathcal Z$, fitted coefficients, initialization and $x_t$ \\
\rowcolor{lightgreen} $K^{\rm eval}_{h,s}$ & Evaluation kernel estimate from fresh heads and a fresh SRHT drawn independently of $\mathcal F$ \\
\rowcolor{lightgreen} $I^\star_{\ell t},\ \widehat I_{\ell t}$ & Reference score $k^\star(x_\ell,x_t)\omega_\ell$ and its sketched estimate $K^{\rm eval}_{h,s}(x_\ell,x_t)\omega_\ell$ Eq. \ref{eq:conditional-score} \\
\rowcolor{lightgreen} $V_{\ell t},\ \overline V_{\ell t}$ & Exact variance of the evaluation kernel for the pair $(x_\ell,x_t)$ (\Cref{prop:two-axis}), and a valid upper bound on it fixed given $\mathcal F$ \\
\rowcolor{lightgreen} $\delta,\ \delta_\ell$ & Total failure probability; its per-landmark allocation with $\sum_\ell\delta_\ell\le\delta$ \\
\rowcolor{lightgreen} $\varepsilon_\ell$ & Conditional Chebyshev radius $\omega_\ell\sqrt{\overline V_{\ell t}/\delta_\ell}$ \Cref{thm:topk} \\
\rowcolor{lightgreen} $\widehat T$ & Observed top-$K$ set $\mathrm{Top}_K(\widehat I_{\cdot t})$; certified only under Eq. \ref{eq:observed-separation} \\
\rowcolor{lightgreen} $\Omega_\ell$ & Random coefficient displacement across independent repeated fits \\
\rowcolor{lightgreen} $\bar\omega_\ell,\ \tau_\ell^2$ & Mean $\E\Omega_\ell$ and variance $\Var\Omega_\ell$ of the displacement \\
\rowcolor{lightgreen} $\mathcal B=(h,s)$ & Kernel budget \\
\rowcolor{lightgreen} $K^{\mathcal B}_{\ell t}$ & Evaluation kernel estimate at budget $\mathcal B$, independent of $\Omega_\ell$ \\
\rowcolor{lightgreen} $\kappa_{\ell t},\ V^{\mathcal B}_{\ell t}$ & Mean and variance of $K^{\mathcal B}_{\ell t}$; $\kappa_{\ell t}$ does not depend on $\mathcal B$ \\
\rowcolor{lightgreen} $Z^{\mathcal B}_{\ell t}$ & Repeated-fit score $K^{\mathcal B}_{\ell t}\Omega_\ell$ Eq. \ref{eq:repeated-fit-score} \\
\rowcolor{lightgreen} $\mu_{\ell t}$ & Population score $\kappa_{\ell t}\bar\omega_\ell$ \\
\rowcolor{lightgreen} $\sigma^2_{\ell t}(\mathcal B)$ & Exact repeated-fit variance $V^{\mathcal B}_{\ell t}\bar\omega_\ell^2+\kappa_{\ell t}^2\tau_\ell^2+V^{\mathcal B}_{\ell t}\tau_\ell^2$ Eq. \ref{eq:product-rule} \\
\rowcolor{lightgreen} $T^\star$ & Population top-$K$ set $\mathrm{Top}_K(\mu_{\cdot t})$ \\
\rowcolor{lightgreen} $r_\ell(\mathcal B),\ r^{\rm fit}_\ell$ & Chebyshev radius $\sigma_{\ell t}(\mathcal B)/\sqrt{\delta_\ell}$, and the coefficient-fit floor $|\kappa_{\ell t}|\tau_\ell/\sqrt{\delta_\ell}$ that it cannot go below Eq. \ref{eq:fit-floor-radius} \\
\bottomrule
\end{tabular}
\caption{Notation: influence scores and the two ranking certificates.}
\label{tab:notation-certificates}
\end{table*}

\begin{table*}[htbp]
\centering
\small
\begin{tabular}{@{}>{\raggedright\arraybackslash}p{0.24\textwidth}>{\raggedright\arraybackslash}p{0.72\textwidth}@{}}
\toprule
\textbf{Symbol} & \textbf{Definition} \\ \midrule
\rowcolor{lightgray} \multicolumn{2}{l}{\textbf{Landmark selection, residual and budget}} \\
\rowcolor{lightyellow} $\mathcal X$,\ $n$ & Unlabeled data set and the number of candidate points \\
\rowcolor{lightyellow} $M$ & Landmark pool size ($2000$ on Adult, $512$ on MNIST) \\
\rowcolor{lightyellow} $G\in\R^{n\times q}$,\ $q$ & Selection feature matrix and its feature dimension \\
\rowcolor{lightyellow} $\Ktil=GG^\top$ & Gram matrix of the selection features \\
\rowcolor{lightyellow} $\psi_{h,s}(x)$ & $h$-head sketched feature map $h^{-1/2}[(SJ_x^\top w_1)^\top,\dots,(SJ_x^\top w_h)^\top]^\top\in\R^{hs}$, so $q=hs$ \\
\rowcolor{lightyellow} $\Ktil_{n\mathcal Z},\ \Ktil_{\mathcal Z\mathcal Z}$ & Blocks of $\Ktil$ between all points and landmarks, and among landmarks; $^{+}$ is the pseudoinverse \\
\rowcolor{lightyellow} $R$ & Nystr\"om residual $\Ktil-\Ktil_{n\mathcal Z}\Ktil_{\mathcal Z\mathcal Z}^{+}\Ktil_{\mathcal Z n}$ ($R_t$ after $t$ pivots); $R\succeq0$ \\
\rowcolor{lightyellow} $\tr(R)/\tr(K)$ & Relative Nystr\"om residual (Nystr\"om Quality) \\
\rowcolor{lightyellow} $\delta^{(t)}_j,\ j_{t+1}$ & Residual diagonal of point $j$ after $t$ pivots; the next greedy pivot $\arg\max_j\delta^{(t)}_j$ \\
\rowcolor{lightyellow} $\psdle$ & Loewner (positive semidefinite) order \\
\rowcolor{lightyellow} $C,\ C_c$ & Number of classes; index set of class $c$ \\
\rowcolor{lightyellow} $\tau_c$ & Trace share of class $c$, $\sum_{j\in C_c}\Ktil_{jj}/\tr\Ktil$ (\Cref{prop:coverage}) \\
\rowcolor{lightyellow} $\kappa_C$ & KREPES class coverage number: rank the pool by $\omega_\ell$ and count landmarks until every class appears (Budget Selection); optimum $\kappa_C=C$ \\
\rowcolor{lightyellow} $\lambda$ & User-chosen ridge in the effective dimension \\
\rowcolor{lightyellow} $d_{\rm eff}(\lambda;A)$ & Effective dimension $\tr(A(A+\lambda I)^{-1})=\sum_i\lambda_i(A)/(\lambda_i(A)+\lambda)$ of a PSD matrix $A$ \\
\rowcolor{lightyellow} $\lambda_i(\cdot)$ & $i$-th eigenvalue, in decreasing order \\
\rowcolor{lightyellow} $\Ktil_{MM},\ \widetilde Q$ & A principal submatrix of $\Ktil$, and a Nystr\"om approximation with $\widetilde Q\psdle\Ktil$ (\Cref{lem:deff}) \\
\rowcolor{lightyellow} $\alpha$ & Power-law spectral decay exponent, $\lambda_i\propto i^{-\alpha}$ (\Cref{cor:noknee}) \\
\rowcolor{lightyellow} $K_{nm},\ K_{mn}$ & Data--landmark and landmark--data kernel blocks \\
\rowcolor{lightyellow} $K_{mm},\ K_{nn}$ & Landmark--landmark and data--data kernel blocks \\
\rowcolor{lightyellow} $K_{pnm},\,K_{vm},\,K_{tm}$ & Positive (augmented) view, validation and test kernels against the landmarks \\
\rowcolor{lightyellow} PC,\ PC-NTK & Greedy pivoted Cholesky on a surrogate kernel, or on the sketched eNTK (the \method{} selectors) \\
\rowcolor{lightyellow} DPP,\ $k$-center & Projection determinantal point process sampling, and farthest-point search, on the sketched eNTK \\
\rowcolor{lightyellow} $\sigma$ (Alg.~\ref{alg:cairn-stage1}) & Selector: \texttt{rand}, \texttt{kmeans}\cite{kmeanspp}, \texttt{pivchol} (PC), or a sketch-based selector \\
\bottomrule
\end{tabular}
\caption{Notation: landmark selection, Nystr\"om residual and landmark budget.}
\label{tab:notation-selection}
\end{table*}

\begin{table*}[htbp]
\centering
\small
\begin{tabular}{@{}>{\raggedright\arraybackslash}p{0.24\textwidth}>{\raggedright\arraybackslash}p{0.72\textwidth}@{}}
\toprule
\textbf{Symbol} & \textbf{Definition} \\ \midrule
\rowcolor{lightgray} \multicolumn{2}{l}{\textbf{Coefficient initialization}} \\
\rowcolor{lightorange} $k$ & Output (latent) dimension of the coefficient fit ($k=256$); also the number of retained eigenpairs \\
\rowcolor{lightorange} $U_k,\ \Lambda_k$ & Leading $k$ eigenvectors and eigenvalues of the decomposed matrix \\
\rowcolor{lightorange} $\epsilon$ & Eigenvalue jitter; relative, $10^{-6}\max_i|\lambda_i|$ (KREPES's own PCI uses absolute $10^{-6}$) \\
\rowcolor{lightorange} PCI & Principal-component initialization $\tilde A_0=U_k(\Lambda_k+\epsilon I)^{-1/2}$ of the centered $K_{mm}$ (dense or Lanczos eigensolver) \\
\rowcolor{lightorange} $(\hat\lambda_i,\hat u_i)$ & Approximate (Ritz) eigenpair of a symmetric matrix $A$ (\Cref{prop:amplify}) \\
\rowcolor{lightorange} $\rho_i$ & Ritz residual norm $\norm{A\hat u_i-\hat\lambda_i\hat u_i}_2$ \\
\rowcolor{lightorange} $\gamma_i$ & Eigengap $\min_{j\neq i}|\lambda_i-\lambda_j|$ \\
\rowcolor{lightorange} $K_A,\ K_B\in\R^{n\times m}$ & centered kernels of the two augmented views against the landmarks \\
\rowcolor{lightorange} $C_{\rm al}$ & Cross-view alignment moment $\tfrac1nK_A^\top K_B$ \\
\rowcolor{lightorange} $C_{\rm all}$ & Pooled within-view covariance $\tfrac1{2n}(K_A^\top K_A+K_B^\top K_B)$ \\
\rowcolor{lightorange} $B$ & Symmetrized alignment $\tfrac12(C_{\rm al}+C_{\rm al}^\top)$ \\
\rowcolor{lightorange} $W$ & Whitening $(C_{\rm all}+\epsilon I)^{-1/2}$ used by \textsc{AUI} \\
\rowcolor{lightorange} \textsc{CSI} & Contrastive spectral initialization, $U_k(|\Lambda_k|+\epsilon I)^{-1/2}$ from $C_{\rm al}+C_{\rm al}^\top$ Eq. \ref{eq:csi-aui} \\
\rowcolor{lightorange} \textsc{AUI} & Alignment--uniformity initialization $WU_k$, with $U_k$ the leading eigenvectors of $WBW$ Eq. \ref{eq:csi-aui} \\
\rowcolor{lightorange} $\norm{\tilde A_0}_F=\sqrt k$ & Common Frobenius normalization of \textsc{CSI} and \textsc{AUI} \\
\bottomrule
\end{tabular}
\caption{Notation: coefficient initialization.}
\label{tab:notation-init}
\end{table*}

\begin{table*}[htbp]
\centering
\small
\begin{tabular}{@{}>{\raggedright\arraybackslash}p{0.24\textwidth}>{\raggedright\arraybackslash}p{0.72\textwidth}@{}}
\toprule
\textbf{Symbol} & \textbf{Definition} \\ \midrule
\rowcolor{lightgray} \multicolumn{2}{l}{\textbf{Training, evaluation and identifiability}} \\
\rowcolor{lightpurple} $z,\ z_A,\ z_B$ & Latent $K\tilde A+b$; its two-view versions $K_A\tilde A+b$ and $K_B\tilde A+b$ \\
\rowcolor{lightpurple} $b,\ b_0$ & Bias of the kernel encoder (initialized to $0.1\cdot\mathbf 1$) and its initial value; $z_0$ is the latent at $(\tilde A_0,b_0)$ \\
\rowcolor{lightpurple} $\ell,\ \mathcal L$ (Alg.~\ref{alg:cairn-stage2}) & Self-supervised loss (BT, SimCLR, VICReg, BYOL) and its batch value \\
\rowcolor{lightpurple} $\iota,\ T,\ p$ & Initializer, number of epochs and early-stopping patience (Alg.~\ref{alg:cairn-stage2}) \\
\rowcolor{lightpurple} $\gamma,\ \gamma_0$ (Alg.~\ref{alg:krepes}) & KREPES bias and its initial value \\
\rowcolor{lightpurple} $\lambda$ (Alg.~\ref{alg:krepes}) & KREPES Gauss--Newton damping \\
\rowcolor{lightpurple} $g,\ r_b,\ X_b$ & KREPES accumulated gradient, batch residual and data batch \\
\rowcolor{lightpurple} $K^A_{bm},\ K^B_{bm}$ & Kernels of the two views of batch $b$ against the landmarks \\
\rowcolor{lightpurple} HVP,\ jvp,\ vjp,\ CG & Hessian-vector product, Jacobian-vector product, vector-Jacobian product, conjugate gradient \\
\rowcolor{lightpurple} $\mathrm{conf}(S)$ & $k$-NN confidence: largest class share among the $50$ cosine nearest neighbors of a test point after deleting landmark set $S$ \\
\rowcolor{lightpurple} $\Delta(S)$ & Deletion effect $\mathrm{conf}(\emptyset)-\mathrm{conf}(S)$ \\
\rowcolor{lightpurple} $\varepsilon=0.25$ & Accuracy-equivalence tolerance in percentage points \\
\rowcolor{lightpurple} $\rho$ & Spearman rank correlation between Nystr\"om Quality and $\kappa_C$ \\
\rowcolor{lightpurple} $\tilde x_j,\ a_j,\ a_j^{(0)}$ & In \Cref{app:identify}: $j$-th landmark, $j$-th row of $A$, and its initial value \\
\rowcolor{lightpurple} $\omega_j$ & Ranking score of landmark $j$: parameter displacement, contribution or leave-one-out \\
\rowcolor{lightpurple} $S_\omega,\ \Delta_\omega$ & Top-$K$ set of ranking $\omega$ ($K=10$) and its deletion effect \\
\rowcolor{lightpurple} $N,\ R_1,\dots,R_N$ & Number of random control sets ($N=30$) and the sets themselves \\
\rowcolor{lightpurple} $\pi_\omega$ & Percentile: fraction of random deletions with a smaller effect than $\Delta_\omega$ \\
\rowcolor{lightpurple} $p_\omega$ & One-sided permutation $p$-value $(1+\#\{t:\Delta(R_t)\ge\Delta_\omega\})/(N+1)$ \\
\rowcolor{lightpurple} $S^{(u)}$ & Top-$K$ set of the $\norm{A-A_0}$ ranking for training seed $u$ \\
\rowcolor{lightpurple} $J$ & Seed stability: mean pairwise Jaccard overlap of $S^{(1)},\dots,S^{(s)}$ \\
\rowcolor{lightpurple} Nystr\"om Quality,\ Budget Selection,\ Latent-Space Reselection & Evaluation labels: relative residual, coverage $\kappa_C$, and reselection in coefficient space \\
\bottomrule
\end{tabular}
\caption{Notation: training, evaluation and identifiability.}
\label{tab:notation-experiments}
\end{table*}

\section{Overview of the \method{} Architecture}
\label{app:intro-architecture}
\Cref{fig:cairn-architecture} summarizes \method{} as four stages: matrix-free eNTK approximation and kernel-space landmark selection, self-supervised coefficient fitting, representer influence scoring, and ranking certification. The central design principle is that \emph{the ranked explanation itself is the object to be certified}: each stage exposes its approximation choices and propagates their uncertainty to the final ranking.

\paragraph{Stage 1: Kernel approximation and landmark selection.}
Starting from a frozen backbone $f_\theta$, \method{} approximates the eNTK without materializing Jacobians or the full kernel. It projects outputs onto $h$ Gaussian heads $w_a\sim\mathcal N(0,I/d)$ and parameters through an SRHT $S=\sqrt{P/s}\,RHD$ of width $s$ (\Cref{eq:composed}), yielding $K_{h,s}(x,x')=\frac{1}{h}\sum_a\langle SJ_x^\top w_a,SJ_{x'}^\top w_a\rangle$ (\Cref{eq:composed}). Its variance decomposes exactly into output-head and parameter-sketch terms (\Cref{eq:two-axis}), making the effect of $(h,s)$ explicit. Landmarks are selected directly in kernel space using pivoted Cholesky on the eNTK sketch (PC-NTK), determinantal sampling (DPP), or $k$-center selection. This avoids input-space surrogates that can minimize a proxy residual while leaving the actual kernel residual unchanged or worse. The landmark budget is chosen from the effective dimension $d_{\mathrm{eff}}(\lambda)=\operatorname{tr}\{K(K+\lambda I)^{-1}\}$ (\Cref{lem:deff}), estimated matrix-free by SLQ. Stage~1 outputs the selected landmarks, the required Nystr\"om kernel blocks, and the exact residual $\operatorname{tr}(K_{nn})-\operatorname{tr}(K_{mm}^{+}K_{nm}^{\top}K_{nm})$ as an error certificate.

\paragraph{Stage 2: Coefficient initialization and self-supervised fitting.}
The representation is $z_A=K_{nm}\tilde A+b$ and $z_B=K_{pnm}\tilde A+b$, with $\tilde A\in\mathbb R^{M\times k}$. \method{} initializes $\tilde A_0$ using principal-component initialization (PCI) with spectrum-restricted Lanczos, contrastive spectral initialization (CSI), its whitened alignment--uniformity variant (AUI), or Kaiming initialization. The coefficients are then optimized with a self-supervised objective such as SimCLR, Barlow Twins, VICReg, or BYOL. Unlike KREPES's single Gauss--Newton step, \method{} uses minibatch first-order optimization. Stage~2 returns $\tilde A$, $b$, and the displacement $\Delta\tilde A=\tilde A-\tilde A_0$.

\paragraph{Stage 3: Representer influence.}
Each landmark receives a coefficient-displacement weight $\omega_\ell=\|\tilde A_{\ell,:}-\tilde A_{0,\ell,:}\|_2$, and a query $x_t$ ranks landmarks using $I_{\ell t}=k(x_\ell,x_t)\omega_\ell$ (\Cref{eq:influence}). For a fixed kernel expansion, this is an exact algebraic decomposition of the representation's dependence on each landmark. The kernel factor is affected by Stage~1 randomness, whereas $\omega_\ell$ depends on Stage~2 fitting variability; Stage~4 separates these sources.

\paragraph{Stage 4: Certification and empirical audit.}
\method{} provides three complementary forms of evidence. \emph{Fixed-fit certification} conditions on $\omega_\ell$, giving $\operatorname{Var}(\hat I_{\ell t}\mid F)=\omega_\ell^2V_{\ell t}$, with $V_{\ell t}$ determined by \Cref{eq:two-axis}; concentration bounds and a union bound yield top-$K$ intervals that certify the ranking when they separate at the cut (\Cref{thm:topk}). \emph{Repeated-fit stability} accounts for coefficient variability through the exact product rule (\Cref{eq:product-rule}), $\sigma^2(B)=V^B\bar\omega^2+\kappa^2\tau^2+V^B\tau^2$, where $B=(h,s)$, $V^B=\operatorname{Var}[k]$, $\kappa=\mathbb E[k]$, $\bar\omega=\mathbb E[\omega]$, and $\tau^2=\operatorname{Var}[\omega]$. Because the term $\kappa^2\tau^2$ persists as $B$ grows, the attainable certificate radius has a nonzero fit floor; when the ranking gap falls below this floor, increasing the kernel budget cannot certify it. \emph{Independent confidence-drop auditing} provides an empirical check: with $z=\operatorname{normalize}(KA+b)$, deleting landmarks $S$ produces $\Delta(S)=\mathrm{conf}_{\emptyset}-\mathrm{conf}_{S}$, where confidence is the largest class share among the $50$ nearest reference points by cosine similarity. We compare influence-ranked top-$10$ deletions, $20$ random deletions, and Latent-Space Reselected landmarks, requiring the influence drop to exceed the full random-deletion spread.

\paragraph{Output.}
Each explanation is reported as \emph{certified} when the top-$K$ intervals separate at the chosen $\delta$, \emph{uncertified} when they do not, or \emph{fit-limited} when coefficient variability sets the certification floor. The independent confidence-drop audit is reported alongside these analytic verdicts. Thus, \method{} propagates the exact second-moment contributions of the head count $h$, sketch width $s$, landmark budget $m$, and coefficient fit to the final influence ranking, explicitly identifying when the available evidence is insufficient to support an explanation.

\section{Proofs}
\label{app:proofs}

\begin{proof}[Proof of \Cref{thm:srht-exact}]
Since $H$ is symmetric and orthogonal, $\tr(S^\top SC)=\frac Ps\sum_{t\in T}a_t$ with $a_j=(HDCDH)_{jj}$ and $T$ a simple random sample of size $s$. Hence the estimate is unbiased for $\sum_ja_j=\tr C$. Given $D$, sampling without replacement gives variance $\frac{P-s}{(P-1)s}[P\sum_ja_j^2-(\sum_ja_j)^2]$. Since $\sum_ja_j=\tr C$ for every $D$, the outer variance vanishes. Write $a_j=\sum_{k,l}H_{jk}H_{jl}\varepsilon_k\varepsilon_lC_{kl}$. With $\E[\varepsilon_k\varepsilon_l\varepsilon_{k'}\varepsilon_{l'}]$ nonzero only for paired indices and $H_{jk}^2=1/P$,
$\E a_j^2=P^{-2}\bigl[(\tr C)^2+\sum_{k\ne l}(C_{kl}^2+C_{kl}C_{lk})\bigr]$. The sum equals $2(\norm{\bar C}_F^2-\sum_kC_{kk}^2)=\Phi(C)$. Summing over $j$ gives $P\,\E\sum_ja_j^2=(\tr C)^2+\Phi(C)$. For $C=vu^\top$, $\norm{\bar C}_F^2=\frac12(\norm u^2\norm v^2+\langle u,v\rangle^2)$ and $C_{kk}=u_kv_k$.
\end{proof}

\begin{proof}[Proof of \Cref{prop:two-axis}]
Condition on the heads. The sketch is unbiased, so $\E_S[K_{h,s}\mid w]=\frac1h\sum_jw_j^\top Mw_j$. For $w\sim\mathcal N(0,d^{-1}I)$, $\Var(w^\top Mw)=2d^{-2}\norm{\bar M}_F^2$, which gives the output axis. Given $w$, $K_{h,s}=\tr(S^\top SC)$ with $C=\frac1h\sum_jC_j$, so \Cref{thm:srht-exact} gives the conditional variance. $\Phi(C)=2\sum_{k\ne l}\bar C_{kl}^2$ is a positive semidefinite quadratic form. For independent $C_j$ with mean $\bar C_1$, $\E\Phi(\frac1h\sum_jC_j)=\Phi(\bar C_1)+(\E\Phi(C_1)-\Phi(\bar C_1))/h$. Jensen gives $\E\Phi(C_1)\ge\Phi(\bar C_1)\ge0$. The law of total variance has no cross-term. Relative form: $\E K_{h,s}=\tr(\bar M)/d$.
\end{proof}

\begin{proof}[Proof of \Cref{thm:topk}]
Condition on $\mathcal F$.
Fresh evaluation randomness and \Cref{prop:two-axis} give
$\E\widehat I_{\ell t}=I^\star_{\ell t}$ and
$\Var(\widehat I_{\ell t})=\omega_\ell^2V_{\ell t}$.
For positive radii, Chebyshev gives
\[
\Pr\left(
|\widehat I_{\ell t}-I^\star_{\ell t}|>\varepsilon_\ell
\mid\mathcal F
\right)\le\delta_\ell.
\]
A zero radius implies zero variance and exact equality almost surely.
A union bound yields simultaneous coverage with probability
at least $1-\delta$.
On that event, Eq. \ref{eq:observed-separation} implies
$I^\star_{\ell t}>I^\star_{jt}$ for every
$\ell\in\widehat T$ and $j\notin\widehat T$.
Thus an incorrect issued certificate can occur only outside
the simultaneous-coverage event.
Taking all radii below their maximum gives the stated
uniform-gap sufficient condition.
\end{proof}

\begin{proof}[Proof of Eq. \ref{eq:conservative-variance}]
Write $a=\norm{J_{x_\ell}}_F^2$ and $b=\norm{J_{x_t}}_F^2$.
Then $\norm{\bar M}_F^2\le ab$ and
$\Phi(\bar C_1)\le2ab/d^2$.
For $u=J_{x_\ell}^\top w$ and $v=J_{x_t}^\top w$,
the rank-one formula gives
$\Phi(vu^\top)\le2\norm u^2\norm v^2$.
With $A=J_{x_\ell}J_{x_\ell}^\top$ and
$B=J_{x_t}J_{x_t}^\top$, Gaussian fourth moments give
\[
\E[(w^\top Aw)(w^\top Bw)]
=\frac{\tr A\tr B+2\tr(AB)}{d^2}
\le\frac{3ab}{d^2}.
\]
Hence $\E\Phi(C_1)\le6ab/d^2$.
The parameter bracket in Eq. \ref{eq:two-axis} is
$(1-1/h)\Phi(\bar C_1)+(1/h)\E\Phi(C_1)$,
bounded by $(2+4/h)ab/d^2$.
Substitute these bounds and use
$a/d=k^\star(x_\ell,x_\ell)$ and
$b/d=k^\star(x_t,x_t)$.
\end{proof}

\begin{proof}[Proof of \Cref{thm:fit-floor}]
By independence,
\[
\E[Z_{\ell t}^{\mathcal B}]
=\E[K_{\ell t}^{\mathcal B}]\E[\Omega_\ell]
=\kappa_{\ell t}\bar\omega_\ell
=\mu_{\ell t}.
\]
The variance of a product of independent random variables is
\[
\Var[K\Omega]
=\Var(K)\E[\Omega]^2
+\E[K]^2\Var(\Omega)
+\Var(K)\Var(\Omega),
\]
which gives Eq. \ref{eq:product-rule}. All three terms are nonnegative,
so
\[
\sigma_{\ell t}^2(\mathcal B)
\ge \kappa_{\ell t}^2\tau_\ell^2
\]
and hence
$r_\ell(\mathcal B)\ge r_\ell^{\rm fit}$.
If $V_{\ell t}^{\mathcal B}\to0$, the first and third terms in Eq. \ref{eq:product-rule} vanish, proving convergence of the radius to the floor.

\subsection{Attribution-level Product rule}
\label{app:2c}
The score $I_{\ell t}=k(x_\ell,x_t)\,\omega_\ell$ is computed through four approximations: the output heads ($h$), the parameter sketch ($s$), the landmark sample ($\mathcal Z$, of size $m$), and the coefficient fit that
produces $\tilde A$ (which is also where KREPES's curvature solve enters). They do not act on the score symmetrically. Once $x_\ell$ and $x_t$ are fixed,
the heads and the sketch act only on the kernel entry. The landmark sample and the fit act only on the weight, through $\Delta\tilde A$. The score is therefore a product of two random factors,
\begin{equation}
  Z^{\mathcal B}_{\ell t}=K^{\mathcal B}_{\ell t}\,\Omega_\ell ,
  \qquad
  \underbrace{K^{\mathcal B}_{\ell t}}_{\text{output and parameter axes}},\quad
  \underbrace{\Omega_\ell}_{\text{sample and fit axes}},
  \label{eq:two-factor}
\end{equation}
with $\E K^{\mathcal B}_{\ell t}=\kappa_{\ell t}$,
$\Var K^{\mathcal B}_{\ell t}=V^{\mathcal B}_{\ell t}$ (given exactly by
\Cref{prop:two-axis}), $\E\Omega_\ell=\bar\omega_\ell$ and
$\Var\Omega_\ell=\tau_\ell^2$.
 
\paragraph{Independence by sample splitting.}
The two factors are made independent by construction: the heads and sketch used to \emph{evaluate} the kernel row are drawn fresh, independently of all randomness used to \emph{fit} $\tilde A$ (landmark draw, initialization, minibatch order and stopping). Under it the variance of the product is an identity, not a delta-method approximation.
 
\begin{proposition}[Attribution-level product rule]
\label{prop:product-rule}
If $K^{\mathcal B}_{\ell t}$ and $\Omega_\ell$ are independent with finite second moments, then $\E Z^{\mathcal B}_{\ell t}=\kappa_{\ell t}\bar\omega_\ell$
and
\begin{equation}
  \Var Z^{\mathcal B}_{\ell t}
  =\underbrace{V^{\mathcal B}_{\ell t}\,\bar\omega_\ell^{2}}_{\text{kernel factor}}
  +\underbrace{\kappa_{\ell t}^{2}\,\tau_\ell^{2}}_{\text{weight factor}}
  +\underbrace{V^{\mathcal B}_{\ell t}\,\tau_\ell^{2}}_{\text{cross-term}} .
  \tag{\ref{eq:product-rule}}
\end{equation}
\end{proposition}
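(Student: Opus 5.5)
The plan is a direct moment computation for a product of two independent square-integrable random variables. Throughout, write $K=K^{\mathcal B}_{\ell t}$ and $\Omega=\Omega_\ell$.

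\textbf{Step 1: integrability and the mean.} Both $K$ and $\Omega$ have finite second moments, and they are independent. Hence $K^2$ and $\Omega^2$ are independent and integrable, so $\E[K^2\Omega^2]=\E K^2\,\E\Omega^2<\infty$. This shows $Z=K\Omega$ is square-integrable, so its variance is well defined. Independence also factorizes the first moment, giving $\E Z=\E K\,\E\Omega=\kappa_{\ell t}\bar\omega_\ell$.

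\textbf{Step 2: expand the variance.} Use
\[
\Var Z=\E K^2\,\E\Omega^2-(\E K)^2(\E\Omega)^2 .
\]
Substitute $\E K^2=V^{\mathcal B}_{\ell t}+\kappa_{\ell t}^2$ and $\E\Omega^2=\tau_\ell^2+\bar\omega_\ell^2$. Multiplying out gives four terms:
\[
V^{\mathcal B}_{\ell t}\tau_\ell^2,\qquad V^{\mathcal B}_{\ell t}\bar\omega_\ell^2,\qquad \kappa_{\ell t}^2\tau_\ell^2,\qquad \kappa_{\ell t}^2\bar\omega_\ell^2 .
\]
The last term cancels against $(\E K)^2(\E\Omega)^2$. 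The remaining three terms are exactly the kernel factor, the weight factor and the cross-term of Eq.~\ref{eq:product-rule}.

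\textbf{Step 3: where the variance $V^{\mathcal B}_{\ell t}$ comes from.} The value of $V^{\mathcal B}_{\ell t}$ is supplied by \Cref{prop:two-axis} applied to the pair $(x_\ell,x_t)$. The product rule itself only needs $V^{\mathcal B}_{\ell t}$ to be the variance of $K$, not its closed form.

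\textbf{Main obstacle: justifying independence.} The algebra is routine. The one point needing care is that independence holds only by the sample-splitting construction. The evaluation heads and SRHT must be drawn fresh, independently of all fitting randomness: the landmark draw, initialization, minibatch order and stopping. Everything must also be conditioned on $\mathcal G$, so that $\kappa_{\ell t}$ is a deterministic quantity.

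If the landmark draw is part of the fitting randomness, then $\kappa_{\ell t}$ itself depends on $\mathcal Z$. In that case I would condition on $\mathcal Z$ as in the protocol of \Cref{thm:fit-floor}. I would note that without this conditioning the identity fails, because a covariance term $\Cov(K,\Omega)$, together with higher mixed moments, would appear.
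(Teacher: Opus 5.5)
Your proposal is correct and matches the paper's proof: independence factors $\E[(K\Omega)^2]=\E K^2\,\E\Omega^2=(V+\kappa^2)(\tau^2+\bar\omega^2)$, and subtracting $\kappa^2\bar\omega^2$ leaves the three terms. Your added remarks, that $Z$ is square-integrable and that everything should be conditioned on $\mathcal G$ so that $\kappa_{\ell t}$ is deterministic, are sound and make explicit what the paper leaves implicit.
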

 
\begin{proof}
By independence, $\E[(K\Omega)^2]=\E[K^2]\,\E[\Omega^2]=(V+\kappa^2)(\tau^2+\bar\omega^2)$. Subtracting $(\E K\Omega)^2=\kappa^2\bar\omega^2$ leaves
$V\bar\omega^2+\kappa^2\tau^2+V\tau^2$.
\end{proof}
 
\paragraph{What each term measures.}
The three terms are nonnegative, so their shares of $\Var Z^{\mathcal B}_{\ell t}$ sum to one. These shares are what \Cref{fig:2c}(b) plots.
\begin{itemize}
  \item The \emph{kernel factor} $V^{\mathcal B}_{\ell t}\bar\omega_\ell^2$ is
    the kernel error scaled by the typical weight. The head count and sketch width reduce it through \Cref{prop:two-axis}, and they reduce the cross-term by the same factor.
  \item The \emph{weight factor} $\kappa_{\ell t}^2\tau_\ell^2$ is the fit error scaled by the exact kernel. It does not depend on $\mathcal B$. This is the
    coefficient-fit floor of \Cref{thm:fit-floor}, where it sets $r^{\rm fit}_\ell=|\kappa_{\ell t}|\tau_\ell/\sqrt{\delta_\ell}$.
  \item The \emph{cross-term} $V^{\mathcal B}_{\ell t}\tau_\ell^2$ is present
    even under exact independence. It is small relative to the other two whenever either factor has a small coefficient of variation, since
    \begin{equation}
      \frac{\text{cross}}{\text{kernel}}
      =\frac{\tau_\ell^2}{\bar\omega_\ell^2}
      =\mathrm{CV}^2(\Omega_\ell),
      \qquad
      \frac{\text{cross}}{\text{weight}}
      =\frac{V^{\mathcal B}_{\ell t}}{\kappa_{\ell t}^2}
      =\mathrm{CV}^2(K^{\mathcal B}_{\ell t}).
      \label{eq:cross-cv}
    \end{equation}
\end{itemize}
The first ratio makes the cross-term a direct readout of fit stability. A cross-term that is a few percent of the kernel term means $\Omega_\ell$ varies by
only about a sixth of its mean, which is what \Cref{fig:2c}(b) shows.
 
\paragraph{Why a product-form bound, not a composition bound.}
A generic composition argument bounds each factor separately and chains the errors. On events $|K-\kappa|\le\varepsilon_K$ and $|\Omega-\bar\omega|\le\varepsilon_\Omega$,
\begin{equation}
  |Z^{\mathcal B}_{\ell t}-\mu_{\ell t}|
  \le|\bar\omega_\ell|\,\varepsilon_K+|\kappa_{\ell t}|\,\varepsilon_\Omega
     +\varepsilon_K\varepsilon_\Omega ,
  \label{eq:composition}
\end{equation}
and each event needs its own share of the failure probability. With Chebyshev radii $\varepsilon_K=\sqrt{V^{\mathcal B}_{\ell t}/\delta'}$ and
$\varepsilon_\Omega=\tau_\ell/\sqrt{\delta'}$ (so $\delta'=\delta_\ell/2$), the right-hand side of Eq. \ref{eq:composition} is
\[
  \frac{\sqrt V\,|\bar\omega|+|\kappa|\,\tau}{\sqrt{\delta'}}
  +\frac{\sqrt V\,\tau}{\delta'} .
\]
The product rule instead applies Chebyshev once to the product. Its radius is $r_\ell(\mathcal B)=\sigma_{\ell t}(\mathcal B)/\sqrt{\delta_\ell}$, where $\sigma_{\ell t}=\sqrt{V\bar\omega^2+\kappa^2\tau^2+V\tau^2}$. Since
$\sqrt{a^2+b^2+c^2}\le a+b+c$, the product radius is never larger than the composition radius. It gains in three ways:
\begin{enumerate}
  \item the error terms add in quadrature rather than linearly, which is up to $\sqrt3$ tighter when the three are comparable;
  \item the whole budget $\delta_\ell$ goes to one event instead of being split in two, which gains a further $\sqrt2$ on the first-order terms;
  \item the cross-term enters at the same $1/\sqrt{\delta_\ell}$ order as the others, instead of the $1/\delta'$ order of the product of two radii.
\end{enumerate}
The price is that the rule controls the variance exactly but not the tails. The radius is a Chebyshev radius and assumes independence of the two factors. If fitting and evaluation share randomness, $\Cov(K,\Omega)\neq0$ adds
$2\bar\omega\kappa\Cov(K,\Omega)$ at first order and Eq. \ref{eq:product-rule}
no longer holds as an identity.
\color{black}

\begin{figure}[t]
  \centering
  \includegraphics[width=0.9\linewidth]{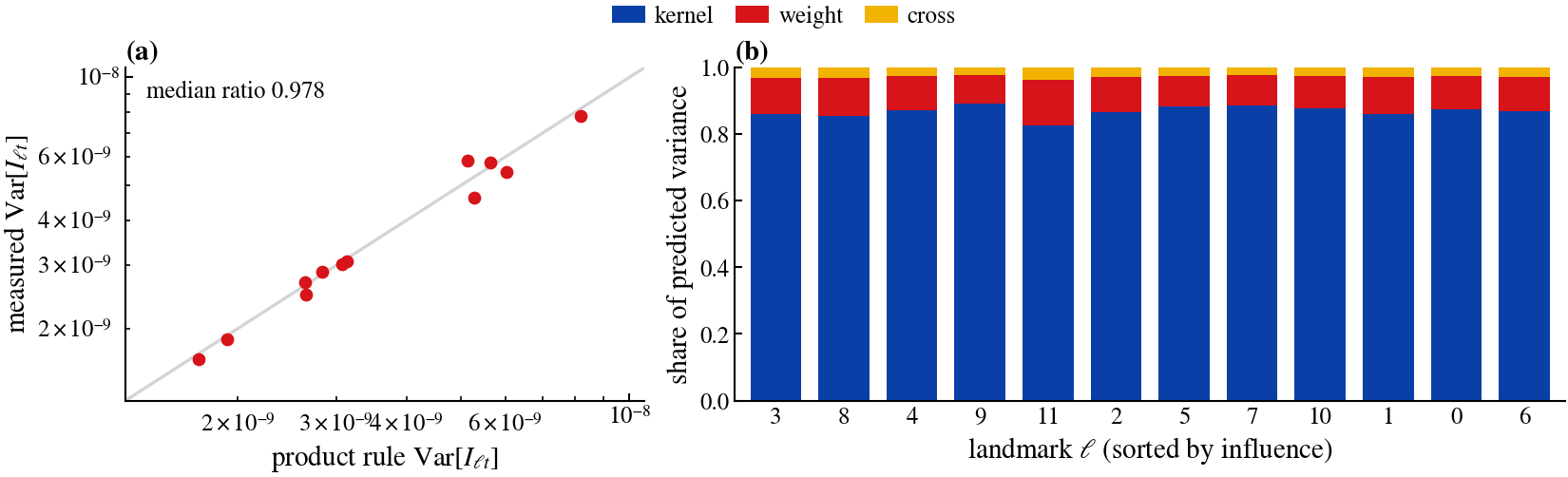}
  \caption{\textbf{\Cref{eq:product-rule} on all $12$ landmarks of one test point}, $64$ realizations. \textbf{(a)} Measured against predicted variance. The median ratio is $0.978$; landmarks range from $0.88$ to $1.14$, within the sampling error of a variance from $64$ draws. \textbf{(b)} Shares of the predicted variance: kernel term $83$ to $89\%$, weight term $8$ to $14\%$, cross-term a median $2.7\%$. The cross-term is present under exact independence and is small because $\omega$ has a small coefficient of variation.}
  \label{fig:2c}
\end{figure}

Chebyshev gives
\[
\Pr\!\left(
|Z_{\ell t}^{\mathcal B}-\mu_{\ell t}|
>r_\ell(\mathcal B)
\right)
\le\delta_\ell.
\]
A union bound therefore yields simultaneous coverage of all $m$
coordinates with probability at least $1-\delta$. On this event,
Eq. \ref{eq:population-separation} implies
$Z_{\ell t}^{\mathcal B}>Z_{jt}^{\mathcal B}$ for every
$\ell\in T^\star$ and $j\notin T^\star$, so the observed top-$K$ set is
$T^\star$.

Finally, because $r_\ell(\mathcal B)\ge r_\ell^{\rm fit}$ for every
$\ell$, the left-hand side of Eq. \ref{eq:population-separation} is at
most $\min_{\ell\in T^\star}(\mu_{\ell t}-r_\ell^{\rm fit})$, while its
right-hand side is at least
$\max_{j\notin T^\star}(\mu_{jt}+r_j^{\rm fit})$. Thus Eq. \ref{eq:floor-overlap} rules out Eq. \ref{eq:population-separation} for every kernel budget.
\end{proof}

\begin{proof}[Proof of \Cref{prop:greedy}]
(i) Since $\Ktil=GG^\top$ is a Gram matrix, conditioning on the selected
columns gives a generalized Schur complement that is positive
semidefinite. Equivalently, $R_t$ is the Gram matrix of the feature
vectors after projection onto the orthogonal complement of the span of
the selected landmark features. Its diagonal therefore contains the
squared projection residuals, and
$\sum_j\delta_j^{(t)}=\tr R_t$.

(ii) Once a point is selected, its projection residual is zero, so it is
not selected again.

(iii) Each positive pivot adds one linearly independent direction to the
selected feature span. Hence there can be at most
$\mathrm{rank}(G)=\mathrm{rank}(\Ktil)\le\min(n,q)$ positive pivots.

(iv) Multiplying $\Ktil$ by $c>0$ multiplies every residual and every
Schur-complement update by the same factor, leaving each
$\arg\max_j\delta_j^{(t)}$ unchanged.
\end{proof}

\begin{proof}[Proof of \Cref{prop:coverage}]
(i) The residual vanishes off $C_c$ and is at most $\Ktil_{jj}$ on it. (ii) The maximum over $C_c$ is at least the mean, and greedy takes a global maximizer.
\end{proof}

\begin{proof}[Proof of \Cref{lem:deff}]
For a positive semidefinite matrix $A$ with eigenvalues $\lambda_i(A)$,
\[
d_{\rm eff}(\lambda;A)
=
\sum_i\frac{\lambda_i(A)}{\lambda_i(A)+\lambda}.
\]

(i) If $0\preceq A\preceq B$, then
$x\mapsto x/(x+\lambda)$ is operator monotone on the positive
semidefinite cone, so
$d_{\rm eff}(\lambda;A)\le d_{\rm eff}(\lambda;B)$.
For a Nystr\"om approximation this applies directly.
For a principal submatrix, Cauchy interlacing and monotonicity of
$x/(x+\lambda)$ give the same inequality after summing the corresponding
eigenvalues.

(ii) Every nonzero eigenvalue contributes a number strictly between
zero and one when $\lambda>0$. Hence
$d_{\rm eff}(\lambda;\Ktil)<\mathrm{rank}(\Ktil)$.
Since $\Ktil=GG^\top$ with $G\in\R^{n\times q}$,
$\mathrm{rank}(\Ktil)\le\min(n,q)$.

(iii) Follows from
\[
\frac{c\lambda_i(A)}
     {c\lambda_i(A)+c\lambda}
=
\frac{\lambda_i(A)}
     {\lambda_i(A)+\lambda}.
\]
\end{proof}

\begin{proof}[Proof of \Cref{cor:noknee}]
$\tr R=\tr\Ktil-\tr\widetilde Q$ with $\widetilde Q\psdle\Ktil$ of rank at most $m$, so $\tr\widetilde Q\le\sum_{i\le m}\lambda_i$ \citep[\S7.4]{hornjohnson2013}. The rates follow by comparison with $\int_m^\infty t^{-\alpha}\,\mathrm dt$ and $\int_1^\infty(1+\lambda t^\alpha)^{-1}\mathrm dt$.
\end{proof}

\begin{proof}[Proof of \Cref{prop:amplify}]
For a normalized Ritz pair $(\hat\lambda_i,\hat u_i)$ with residual
$\rho_i$, standard a posteriori eigenvector perturbation bounds imply
that, for a simple eigenvalue separated by gap $\gamma_i$,
the angular error is of order $\rho_i/\gamma_i$.
The approximate eigenvalue lies within $O(\rho_i)$ of the target
eigenvalue.

Now consider
\[
f(\lambda,u)
=
u(\lambda+\epsilon)^{-1/2}.
\]
A first-order perturbation gives
\[
\Delta f
\approx
(\lambda+\epsilon)^{-1/2}\Delta u
-
\frac12
u(\lambda+\epsilon)^{-3/2}\Delta\lambda.
\]
Using
$\norm{\Delta u}=O(\rho_i/\gamma_i)$ and
$|\Delta\lambda|=O(\rho_i)$ yields
\[
\norm{\Delta f}
=
O\!\left(
\frac{\rho_i}
     {\gamma_i\sqrt{\lambda_i+\epsilon}}
\right)
+
O\!\left(
\frac{\rho_i}
     {(\lambda_i+\epsilon)^{3/2}}
\right).
\]
For a clustered eigenvalue the individual eigenvector is not
well-conditioned, so the corresponding invariant-subspace perturbation
bound should be used instead.
\end{proof}

\section{Algorithms}
\label{app:alg}

\Cref{alg:krepes} restates the KREPES procedure \citep{krepes2026}. \Cref{alg:cairn-stage1,alg:cairn-stage2} give the two stages of \method{} as implemented.

\begin{algorithm}[t]
\caption{KREPES \citep{krepes2026}}
\label{alg:krepes}
\begin{algorithmic}[1]
\Require Unlabeled data $\mathcal X$, landmark count $m$, output dimension $k$, damping $\lambda$.
\Ensure Coefficients $\tilde A\in\R^{m\times k}$, bias $\gamma\in\R^{k}$.
\State Select landmarks $\mathcal Z\subset\mathcal X$ by $k$-means++ or leverage-score sampling.
\State $K_{mm}\approx U_k\Lambda_kU_k^\top$ by dense eigendecomposition; $\tilde A_0\gets U_k\Lambda_k^{-1/2}$, $\gamma_0\gets\mathbf 0$.
\State $g\gets\mathbf 0$
\For{each batch $X_b\subset\mathcal X$}
  \State Compute $K^A_{bm},K^B_{bm}$ for the two views against $\mathcal Z$.
  \State $g\gets g+\nabla_{\tilde A}\mathcal L_b(K^A_{bm}\tilde A_0+\gamma_0,\;K^B_{bm}\tilde A_0+\gamma_0)$
\EndFor
\State $\mathrm{HVP}(v)=\sum_b\mathrm{vjp}\bigl(r_b,\tilde A_0,\mathrm{jvp}(r_b,\tilde A_0,v)\bigr)+\lambda v$
\State $\Delta\tilde A\gets\mathrm{CG}(\mathrm{HVP},-g)$
\State \Return $\tilde A_0+\Delta\tilde A,\;\gamma_0+\Delta\gamma$
\end{algorithmic}
\end{algorithm}

\begin{algorithm}[hbtp]
\caption{\method{} Stage 1: kernels and landmarks}
\label{alg:cairn-stage1}
\begin{algorithmic}[1]
\Require Unlabeled data $\mathcal X$, augmentation, budget $m$, selector $\sigma$.
\Ensure Landmarks $\mathcal Z$ and kernels $K_{mm},K_{nm},K_{pnm},K_{vm},K_{tm}$.
\If{$\sigma\in\{\texttt{rand},\texttt{kmeans}\}$}
  \State Select $\mathcal Z$ in input space.
\ElsIf{$\sigma=\texttt{pivchol}$}
  \State Select $\mathcal Z$ by greedy pivoted Cholesky on a surrogate kernel (PC).
\Else
  \State Sketch the eNTK with an SRHT of width $s$.
  \State Select $\mathcal Z$ on the sketch by greedy pivoted Cholesky (PC-NTK), projection DPP sampling (DPP) or farthest-point search ($k$-center).
\EndIf
\State Repeat selection for the augmented view.
\State Compute $K_{mm}$, $K_{nm}$, $K_{pnm}$ (augmented view), $K_{vm}$ (validation) and $K_{tm}$ (test).
\end{algorithmic}
\end{algorithm}

\begin{algorithm}[hbtp]
\caption{\method{} Stage 2: coefficient fit}
\label{alg:cairn-stage2}
\begin{algorithmic}[1]
\Require Kernels from \Cref{alg:cairn-stage1}, loss $\ell$, initializer $\iota$, epochs $T$, patience $p$.
\Ensure $\tilde A$, $b$ and $\tilde A_0$.
\State $\tilde A\gets$ Kaiming, PCI (dense or Lanczos), \textsc{CSI} or \textsc{AUI}; $\tilde A_0\gets\tilde A$; $b\gets0.1\cdot\mathbf 1$.
\For{$t=1$ \textbf{to} $T$}
  \For{each batch $(K_A,K_B)\subset(K_{nm},K_{pnm})$}
    \State $\mathcal L\gets\ell(K_A\tilde A+b,\;K_B\tilde A+b)$
    \State Clip $\nabla\mathcal L$ to norm $1$; take one Adam step on $(\tilde A,b)$.
  \EndFor
  \State Evaluate validation accuracy; keep the best $(\tilde A,b)$; stop after $p$ epochs without gain.
\EndFor
\State \Return best $(\tilde A,b)$ and $\tilde A_0$
\end{algorithmic}
\end{algorithm}

\textbf{Differences from KREPES and interpretation of the score.}
KREPES computes a local coefficient update with one damped
Gauss--Newton solve at $\tilde A_0$.
\method{} instead optimizes $\tilde A$ with Adam and early stopping, as
\Cref{alg:cairn-stage2} shows. Consequently,
\[
\Delta\tilde A
=
\tilde A-\tilde A_0
\]
is the displacement produced by the specified optimization procedure
and can depend on the learning rate, batch order, stopping rule and
other optimizer choices.

For a fixed kernel expansion, however, the coefficient displacement
still gives the exact algebraic decomposition
\[
z(x_t)-z_0(x_t)
=
\sum_{\ell=1}^{m}
k(x_\ell,x_t)
\bigl(
\tilde A_{\ell,:}-\tilde A_{0,\ell,:}
\bigr)
+
(b-b_0).
\]
Thus each landmark contributes an additive vector
\[
k(x_\ell,x_t)
\bigl(
\tilde A_{\ell,:}-\tilde A_{0,\ell,:}
\bigr)
\]
to the change in the fitted representation within this fixed expansion.
The scalar score
\[
I_{\ell t}
=
k(x_\ell,x_t)
\norm{
\tilde A_{\ell,:}-\tilde A_{0,\ell,:}
}_2
\]
therefore measures a kernel-weighted coefficient displacement.
It should not, without an additional argument, be interpreted as the
same Gauss--Newton response as KREPES or as a leave-one-out retraining
effect.

Optimizer dependence is a separate source of score variability.
In particular, Adam's coordinate-wise normalization may change the
relative magnitudes of coefficient displacements across landmarks.
The sensitivity of the resulting ranking to optimizer seeds,
learning rates and stopping rules is therefore an empirical diagnostic,
not part of the kernel-approximation certificate.

\section{Experimental details}
\label{app:details}

\textbf{Data and kernels.}
Adult uses a ResMLP eNTK with $3$ residual blocks. MNIST uses a ConvNet eNTK. Tabular views are generated by feature masking and Gaussian noise (mask probability $0.1$, noise std $0.05$). 
Kernels are cached once per (dataset, selector) and reused by every Stage~2 run.

\textbf{Selectors.}
Random uniform and $k$-means++ select in input space. PC runs greedy pivoted Cholesky on a surrogate kernel. On Adult this kernel has rank $96$, so PC selects $96$ pivots and fills the remaining $904$ of $1000$ requested landmarks by the top-up rule. This follows \Cref{prop:greedy} (iii). PC-NTK, DPP and $k$-center select on an SRHT sketch of the eNTK with $s=4096$. DPP samples a projection DPP on the sketched features. $k$-center runs farthest-point search on the sketch.

\textbf{Initializers.}
PCI centers $K_{mm}$ and symmetrizes it before the eigensolver. The dense variant uses a float64 eigendecomposition. The Lanczos variant uses float64, full reorthogonalization applied twice per step and $2k+20$ iterations. Its random start vector does not advance the global random state, so every initializer leaves the same training trajectory. Both variants use the relative jitter $\epsilon=10^{-6}\max_i|\lambda_i|$. \textsc{CSI} and \textsc{AUI} center each view, symmetrize every moment before decomposition and use the same relative jitter. KREPES's own PCI (absolute jitter $10^{-6}$, no symmetrization, native precision) is kept as a separate arm for reproduction.

\textbf{Training and evaluation.}
Stage~2 uses Adam with a cosine schedule and gradient clipping at norm
$1$. Checkpoints are selected by validation accuracy with patience $p$.
This introduces label information at model-selection time even though
the Stage~2 representation loss is self-supervised. The probe is an
$\ell_2$-regularized linear SVM (liblinear, $C=1$) with class weights
inversely proportional to class frequency. It is fit on normalized
validation embeddings and scored by balanced accuracy on the held-out
test split. No test label is used for optimizer updates or checkpoint
selection. The probe refuses to run when the label sets of the two
splits differ. \Cref{tab:hparams} lists the remaining settings.

\begin{table}[hbtp]
\centering
\caption{Stage~2 settings. 
}
\label{tab:hparams}
\small
\begin{tabular}{ll}
\toprule
Setting & Value \\
\midrule
Output dimension $k$ & $256$ \\
Epochs / patience & $20$ / $7$ \\
Batch size & $512$ \\
Learning rate / weight decay & $1.25\times10^{-2}$ / $3.0\times10^{-5}$ \\
BT off-diagonal weight & $35$ \\
VICReg $(\lambda,\mu,\nu)$ & $(1.85,\,1.61,\,0.161)$ \\
SimCLR temperature & $0.107$ \\
Ridge for $d_{\rm eff}$ & $10^{-3}$ \\
SRHT width $s$ & $4096$ \\
\bottomrule
\end{tabular}
\end{table}

\section{Identifiability}
\label{app:identify}
Both CAIRN and KREPES learn a kernel encoder whose latent is a sum of per-landmark terms,
\begin{equation}
  z(x) \;=\; \sum_{j=1}^{M} k(x,\tilde x_j)\, a_j \;+\; b,
  \label{eq:app-encoder}
\end{equation}
where $\tilde x_j$ is the $j$-th landmark and $a_j$ is the $j$-th row of $A\in\mathbb{R}^{M\times d}$.
A method is \emph{identifiable} if it can say which landmarks the learned representation relies on.

\subsection{Identifiability metrics}
\label{app:identifiability}

Because of the additive form of Eq. \ref{eq:app-encoder}, this claim can be tested directly without retraining: remove the landmarks the method ranks highest and measure how much the downstream representation degrades.
If the ranking is informative, removing its top landmarks should do more damage than removing the same number of landmarks chosen at random.
Table~\ref{tab:identifiability} reports exactly this comparison, for three different ways of ranking landmarks.

\paragraph{Landmark rankings.}
Each ranking assigns a score $\omega_j$ to every landmark, where a higher score means the landmark is considered more influential. We consider three scores:
\begin{itemize}
  \item \textbf{Parameter displacement}, $\omega_j = \lVert a_j - a_j^{(0)} \rVert_2$: how far the landmark's row of $A$ moved away from the shared initialization $A_0$ during optimization. The intuition is that landmarks the objective needed to change are the ones it depends on.
  \item \textbf{Contribution}: the magnitude of landmark $j$'s term $k(x,\tilde x_j)\,a_j$ in the latent, aggregated over the training set. This measures how much the final representation is built from landmark $j$, independently of where optimization started.
  \item \textbf{Leave-one-out}: the change in the training-set latents when landmark $j$ alone is removed from Eq. \ref{eq:app-encoder}.
\end{itemize}
Displacement depends on the optimization trajectory, whereas contribution and leave-one-out depend only on the final encoder.

\paragraph{Deletion effect.}
Deleting a set $S$ of landmarks means dropping their terms from Eq. \ref{eq:app-encoder}, for both the reference (training) points and the test points, with all other parameters left unchanged.
We measure the effect with the $k$-NN confidence $\mathrm{conf}(\cdot)$ on the test set: the mean share of each test point's cosine-similarity nearest neighbors in the reference set that belong to its top class.
The deletion effect of $S$ is the resulting drop in confidence,
\begin{equation}
  \Delta(S) \;=\; \mathrm{conf}(A,b) \;-\; \mathrm{conf}\bigl(A,b;\,\text{landmarks } S \text{ removed}\bigr).
\end{equation}
For a ranking $\omega$, we write $\Delta_\omega = \Delta(S_\omega)$, where $S_\omega$ contains the $K$ top-scoring landmarks ($K=10$ in all experiments).

\paragraph{Random control and percentile.}
We draw $N=30$ random sets $R_1,\dots,R_N$, each of $K$ landmarks sampled uniformly without replacement. The same sets are shared by all rankings and both methods within a run. The table reports the percentile
\begin{equation}
  \pi_\omega \;=\; \frac{1}{N}\sum_{t=1}^{N} \mathbf{1}\!\left[\Delta(R_t) < \Delta_\omega\right],
\end{equation}
that is, the fraction of random deletions that do less damage than deleting the ranking's top landmarks.

\paragraph{Seed stability.}
Identifiability and reproducibility are separate properties: a ranking can select the same landmarks on every seed and still select the wrong ones.
For CAIRN, we therefore also report how consistently its top-$K$ sets agree across training seeds. Let $S^{(1)},\dots,S^{(s)}$ be these sets for the $\lVert A-A_0\rVert$ ranking. The stability is their mean pairwise Jaccard overlap,
\begin{equation}
  J \;=\; \binom{s}{2}^{-1}\sum_{u<v} \frac{|S^{(u)}\cap S^{(v)}|}{|S^{(u)}\cup S^{(v)}|},
\end{equation}
and the table reports the median of $J$ across runs. A value of $J=1$ means every seed selects the same $K$ landmarks.
KREPES has no seed dependence, so this column is not defined for it.

\subsection{Measuring Identifiability}
The per-ranking breakdown in Table~\ref{tab:identifiability} shows where the difference comes from. On MNIST, all three CAIRN rankings are significant ($\ge 95$) for SimCLR and VICReg. KREPES's contribution and leave-one-out rankings fall to $0$ and $27$--$30$ there, i.e.\ no better than random deletions. On Adult, the only significant ranking from either method is CAIRN's $\lVert A-A_0\rVert$ on SimCLR ($97$). Elsewhere on Adult, the two methods are comparable and neither is significant. CAIRN's top landmarks are also stable across seeds, with Jaccard overlap $0.97$--$1.00$.

\begin{table}[t]
\centering
\small
\setlength{\tabcolsep}{4pt}
\begin{tabular}{llccccccc}
\toprule
 & & \multicolumn{2}{c}{$\lVert A-A_0\rVert$} & \multicolumn{2}{c}{Contribution} & \multicolumn{2}{c}{Leave-one-out} & CAIRN seed \\
\cmidrule(lr){3-4}\cmidrule(lr){5-6}\cmidrule(lr){7-8}
Dataset & Objective & CAIRN & KREPES & CAIRN & KREPES & CAIRN & KREPES & stability \\
\midrule
Adult & Barlow Twins & 7 & 23 & 83 & 70 & 63 & 73 & 1.00 \\
Adult & SimCLR & \textbf{97} & 70 & 83 & 87 & 90 & 93 & 1.00 \\
Adult & VICReg & 0 & 57 & 83 & 57 & 67 & 33 & 1.00 \\
\midrule
MNIST & Barlow Twins & \textbf{100} & \textbf{100} & \textbf{100} & \textbf{100} & 93 & 87 & 0.97 \\
MNIST & SimCLR & \textbf{100} & \textbf{100} & \textbf{100} & 0 & \textbf{100} & 0 & 1.00 \\
MNIST & VICReg & 90 & 73 & \textbf{100} & 30 & \textbf{100} & 27 & 0.97 \\
\bottomrule
\end{tabular}
\caption{Landmark identifiability for every ranking. Each cell is the percentile of the ranking's top-landmark deletion effect (drop in $k$-NN confidence) among 30 random deletions of the same size, median over seeds; \textbf{bold}: at least 95\%, i.e.\ one-sided $p<0.05$. Seed stability: median Jaccard overlap of CAIRN's top landmarks ($\lVert A-A_0\rVert$ ranking) across seeds; KREPES is deterministic.}
\label{tab:identifiability}
\end{table}

\section{Additional Results}
\label{app:additional}

\subsection{Optimization Dynamics}
\label{app:cairn-vs-krepes-dynamics}

\begin{figure}[t]
\centering
\includegraphics[width=0.9\linewidth]{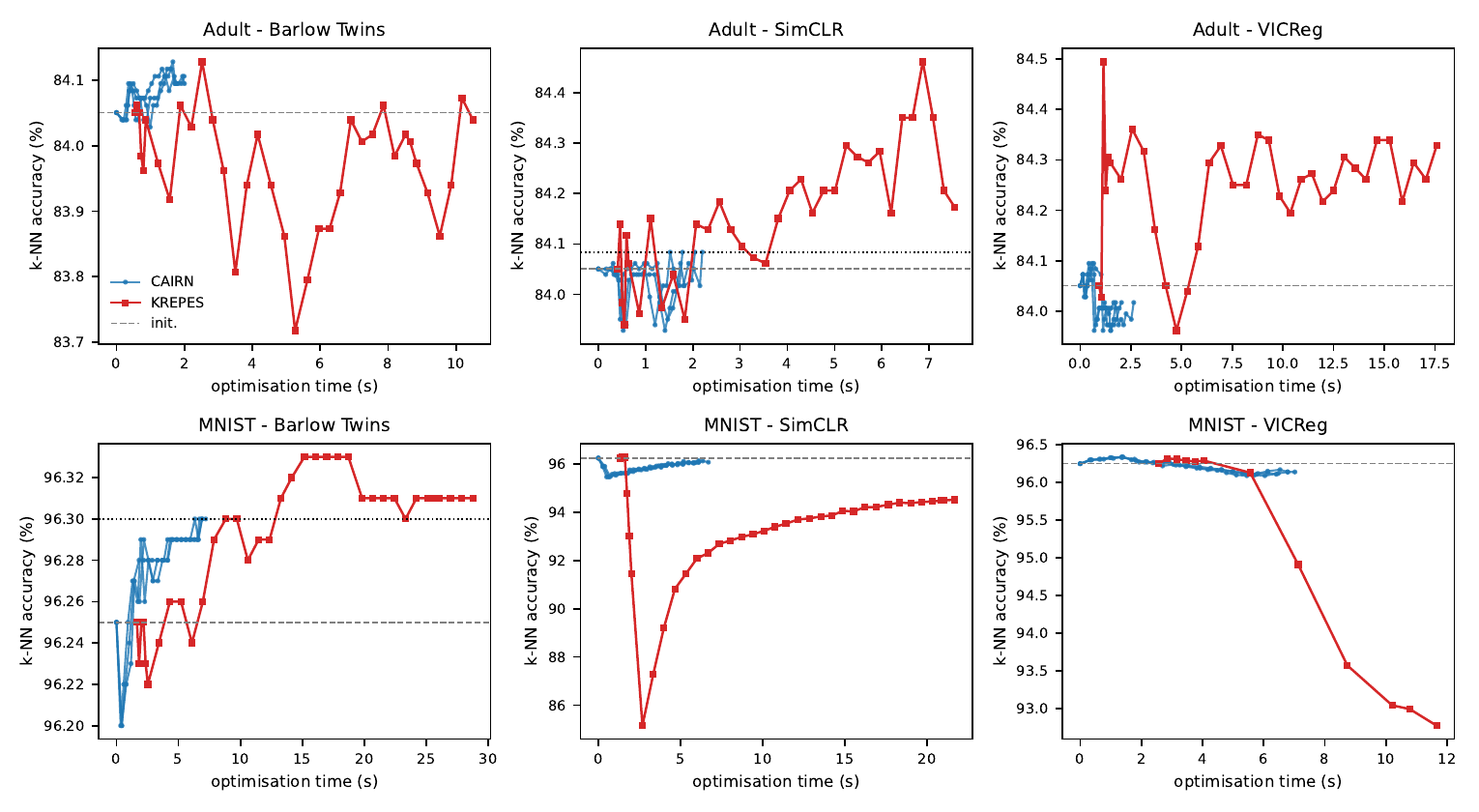}
\caption{$k$-NN test accuracy against optimization wall-clock time for CAIRN (blue; one line per seed, three seeds) and KREPES (red; one marker per CG iterate) on Adult (top) and MNIST (bottom). Both methods optimize the same objective from the same initialization, on the same GPU in float32. Dashed grey line: initialization accuracy. Dotted black line: CAIRN's final accuracy. CAIRN settles within about $2.6$\,s on Adult and $7$\,s on MNIST. KREPES first incurs a setup cost (gradient pass and preconditioner) before its first iterate, then oscillates on Adult and degrades well below the initialization on MNIST SimCLR and VICReg.}
\label{fig:acc-vs-time}
\end{figure}

Figure~\ref{fig:acc-vs-time} plots $k$-NN accuracy against wall-clock time.

CAIRN's first update arrives after about $0.2$\,s. On Adult, its trajectories remain in a narrow band ($83.93$--$84.13\%$) and finish by $2$--$2.6$\,s. KREPES first has to pay a fixed setup cost (gradient pass and preconditioner) of $0.4$--$2.6$\,s before its first CG iterate. Its trajectory then oscillates without settling: on Adult Barlow Twins it spans $83.72$--$84.13\%$, and on Adult VICReg it spans $83.96$--$84.49\%$ across $17.6$\,s. As a result, its final accuracy depends on where CG is stopped. On MNIST, the dynamics differ in kind. On SimCLR, KREPES drops to $85.2\%$ at $2.7$\,s and recovers only to $94.5\%$ after $21.7$\,s. On VICReg, it declines monotonically to $92.8\%$. CAIRN's worst transient is smaller: a dip to $95.5\%$ early on SimCLR, from which it recovers to $96.1\%$.

\begin{figure}[t]
\centering
\includegraphics[width=0.9\textwidth]{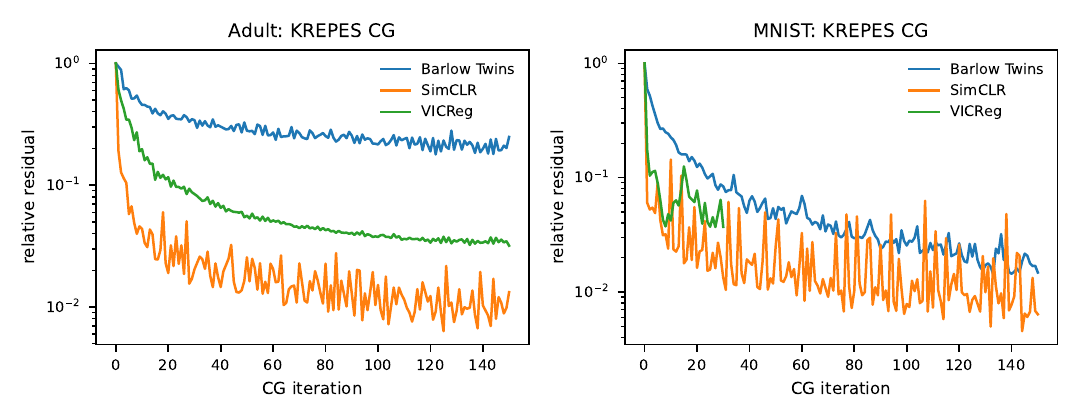}
\caption{Relative residual of KREPES's preconditioned conjugate-gradient solve against CG iteration, on a log scale, for Adult (left) and MNIST (right). No objective converges to a tight tolerance within $150$ iterations. Adult Barlow Twins plateaus near $0.2$. SimCLR reaches about $10^{-2}$ but oscillates by up to an order of magnitude between iterations. MNIST VICReg stops after $20$ iterations at about $0.04$. The inexact solves account for the non-monotone KREPES trajectories in Figure~\ref{fig:acc-vs-time}.}
\label{fig:krepes-residual}
\end{figure}

Figure~\ref{fig:krepes-residual} explains this behavior: the KREPES linear system is never solved accurately.

On Adult VICReg and MNIST Barlow Twins, it reaches $0.03$ and $0.015$. SimCLR reaches the lowest residuals ($\approx 10^{-2}$). However, on both datasets its residual oscillates non-monotonically by up to an order of magnitude from one iteration to the next. This pattern is consistent with an ill-conditioned or indefinite system under float32. On MNIST VICReg, CG runs for only $20$ iterations and stalls at $0.037$. Because each accuracy point in Figure~\ref{fig:acc-vs-time} comes from an inexact solve, KREPES produces the noisy and sometimes degrading trajectories shown there. CAIRN avoids the linear solve entirely, which removes both the setup cost and the dependence on CG convergence.
\color{black}

\textbf{Residual by selector.}
\Cref{tab:d1-all} lists Nystr\"om Quality for every selector. The residual depends only on the landmark set, so it is identical across losses and initializations.

\begin{table}[hbtp]
\centering
\caption{Relative Nystr\"om residual $\tr(R)/\tr(K)$ (\%) by selector.}
\label{tab:d1-all}
\small
\begin{tabular}{lcccccc}
\toprule
 & Random & $k$-means++ & DPP & $k$-center & PC-NTK & PC \\
\midrule
MNIST & 0.526 & 0.514 & \textbf{0.511} & 0.665 & 0.691 & 0.648 \\
Adult & 2.012 & \textbf{1.981} & 1.988 & 3.327 & 4.168 & 2.065 \\
\bottomrule
\end{tabular}
\end{table}

\textbf{Effective dimension by quadrature.}
SLQ estimates $d_{\rm eff}$ from matrix-vector products alone. Across all $12$ (dataset, selector) cells it agrees with the exact value within $0.72\%$, and within $0.04\%$ in $12$ of them. The largest error occurs for PC on Adult, whose kernel is the rank-deficient one noted in \Cref{app:details}.

\section{Reproduction of KREPES Table~1}
\label{app:table1}

\Cref{tab:table1-all} reports accuracy and $\kappa_C$ with Kaiming initialization and $k$-means++ landmarks. On Adult and MNIST accuracy is nearly insensitive to the loss. The three losses span $0.55$ points on Adult and $2.66$ on MNIST. $\kappa_C$ instead varies by an order of magnitude ($2$ to $21$ on Adult, $41$ to $91$ on MNIST) with no matching movement in accuracy. 

\begin{table}[hbtp]
\centering
\caption{Test accuracy and $\kappa_C$, Kaiming initialization, $k$-means++ landmarks. $\kappa_C$ is not comparable across class counts. $^{*}$Collapsed: top two classes take $\geq75\%$ of predictions.}
\label{tab:table1-all}
\small
\begin{tabular}{llccc}
\toprule
Dataset & Loss & Acc (\%) & $\kappa_C$ & Classes seen \\
\midrule
Adult & BT & 81.59 & 9 & 2/2 \\
 & SimCLR & 81.98 & 2 & 2/2 \\
 & VICReg & 81.43 & 21 & 2/2 \\
\addlinespace
MNIST & BT & 90.64 & 91 & 10/10 \\
 & SimCLR & 93.30 & 73 & 10/10 \\
 & VICReg & 93.23 & 41 & 10/10 \\
\bottomrule
\end{tabular}
\end{table}

\section{Verification of the error model}
\label{app:verify}

\subsection{Two-axis decomposition}
\label{app:2b}

\begin{figure}[t]
  \centering
  \includegraphics[width=0.9\linewidth]{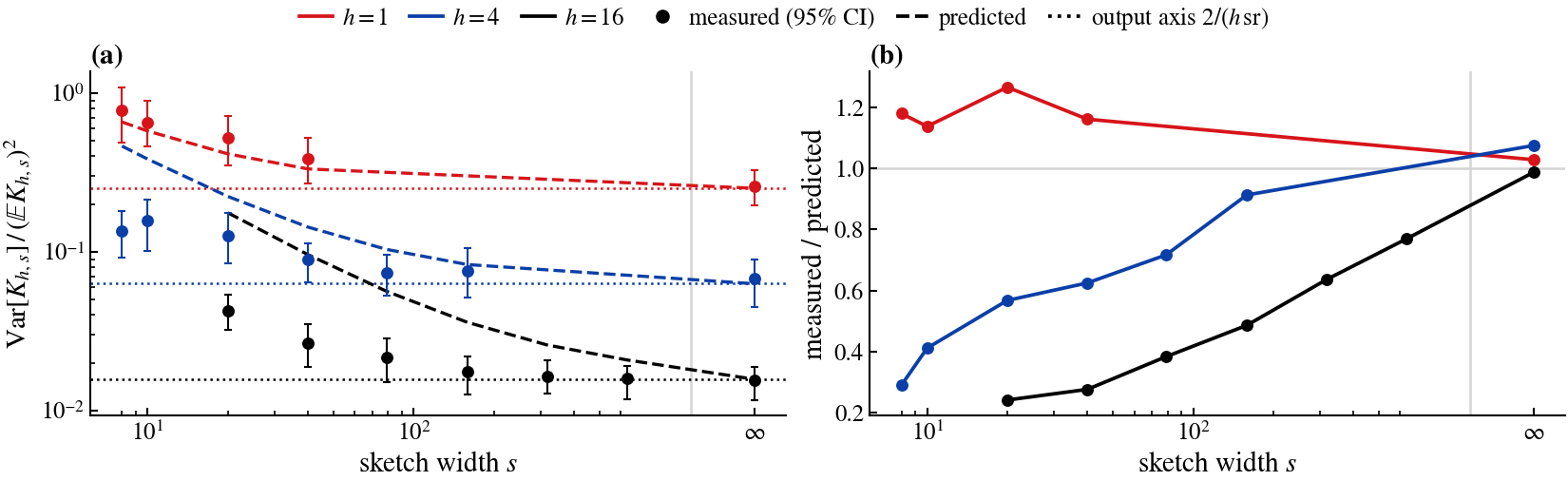}
  \caption{\textbf{\Cref{prop:two-axis} on a real eNTK block} ($r_{\rm tr}(\bar M)=7.9$, $p=1864$ unpadded parameters, $d=8$, $96$ draws per cell, $19$ $(h,s)$ cells). \textbf{(a)} Measured relative variance with 95\% CI against a prediction that uses an $h$-independent JL surrogate for the parameter axis (dashed). Dotted lines mark the output term $2/(h\,r_{\rm tr})$. \textbf{(b)} Measured over predicted. The unsketched cells ($s=\infty$) give $1.03$, $1.07$ and $0.99$. The sketched cells run from $1.27$ at $h=1$ to $0.24$ at $h=16$, because the surrogate lacks the $1/h$ term of Eq. \ref{eq:two-axis}.}
  \label{fig:2b}
\end{figure}

The unsketched cells isolate the output axis and confirm it within $7\%$. The sweep did not retain the per-head vectors, so the $\Phi$ terms of Eq. \ref{eq:two-axis} cannot be evaluated on it exactly. The measured excess over the output term decreases with $h$ at every sketch width, as Eq. \ref{eq:two-axis} predicts. At $s=20$ it is $0.273$, $0.064$ and $0.027$ for $h=1,4,16$, while the surrogate predicts $0.16$ for all three.

\end{document}